\documentclass[10pt,twocolumn,letterpaper]{article}

\usepackage{cvpr}
\usepackage{times}
\usepackage{epsfig}
\usepackage{graphicx}
\usepackage{amsmath}
\usepackage{amssymb}
\usepackage{booktabs}
\usepackage{mathtools}
\usepackage{multirow}
 \usepackage{amsthm}
\newtheorem{lemma}{Lemma}

\setlength{\abovecaptionskip}{4pt plus 3pt minus 2pt}
\setlength{\belowcaptionskip}{8pt plus 3pt minus 2pt}

\setlength{\textfloatsep}{14.0pt plus 2.0pt minus 4.0pt}
\setlength{\textfloatsep}{12pt plus 2.0pt minus 2.0pt}


\usepackage{etoolbox}
\newcommand{\zerodisplayskips}{%
  \setlength{\abovedisplayskip}{2pt}%
  \setlength{\belowdisplayskip}{2pt}%
  \setlength{\abovedisplayshortskip}{2pt}%
  \setlength{\belowdisplayshortskip}{2pt}}
\appto{\normalsize}{\zerodisplayskips}
\appto{\small}{\zerodisplayskips}
\appto{\footnotesize}{\zerodisplayskips}

\graphicspath{ {figures/} }


\usepackage{hyperref}
\hypersetup{pagebackref=true,breaklinks=true,colorlinks}

\cvprfinalcopy 



\ifcvprfinal\pagestyle{empty}\fi
\begin{document}

\title{ScopeFlow: Dynamic Scene Scoping for Optical Flow}

\author{Aviram Bar-Haim$^1$ and Lior Wolf$^{1,2}$ \\
$^{1}$Tel Aviv University\\
$^{2}$Facebook AI Research\\
}

\maketitle
\thispagestyle{empty}

\begin{abstract}
We propose to modify the common training protocols of optical flow, leading to sizable accuracy improvements without adding to the computational complexity of the training process. The improvement is based on observing the bias in sampling challenging data that exists in the current training protocol, and improving the sampling process. In addition, we find that both regularization and augmentation should decrease during the training protocol.

Using an existing low parameters architecture, the method is ranked first on the MPI Sintel benchmark among all other methods, improving the best two frames method accuracy by more than 10\%. The method also surpasses all similar architecture variants by more than 12\% and 19.7\% on the KITTI benchmarks, achieving the lowest Average End-Point Error on KITTI2012 among two-frame methods, without using extra datasets.
\end{abstract}

\section{Introduction}

The field of optical flow estimation has benefited from the availability of acceptable benchmarks. In the last few years, with the adoption of new CNN~\cite{LeCun:1989:BAH:1351079.1351090} architectures, a greater emphasis has been placed on the training protocol.

A conventional training protocol now consists of two stages: (i) pretraining on larger and simpler data and (ii) finetuning on more complex datasets. In both stages, a training step includes the following: (i) sampling batch frames and flow maps, (ii) applying photometric augmentations to the frames, (iii) applying affine (global and relative) transformations to the frames and flow maps, (iv) cropping a fixed size random crop from both input and flow maps, (v) feeding the cropped frames into a CNN architecture, and (vi) backpropagating the loss of the flow estimation. 

While photometric augmentations include variations of the input image values, affine transformations are used to augment the variety of input flow fields. Due to the limited motion patterns represented by today's optical flow datasets, these regularization techniques are required for the data driven training. We chose the word \textit{scoping}, to define the process of affine transformation followed by cropping, as this process sets the scope of the input frames.

\begin{figure}[t]
\begin{center}
    \includegraphics[width=0.99\linewidth]{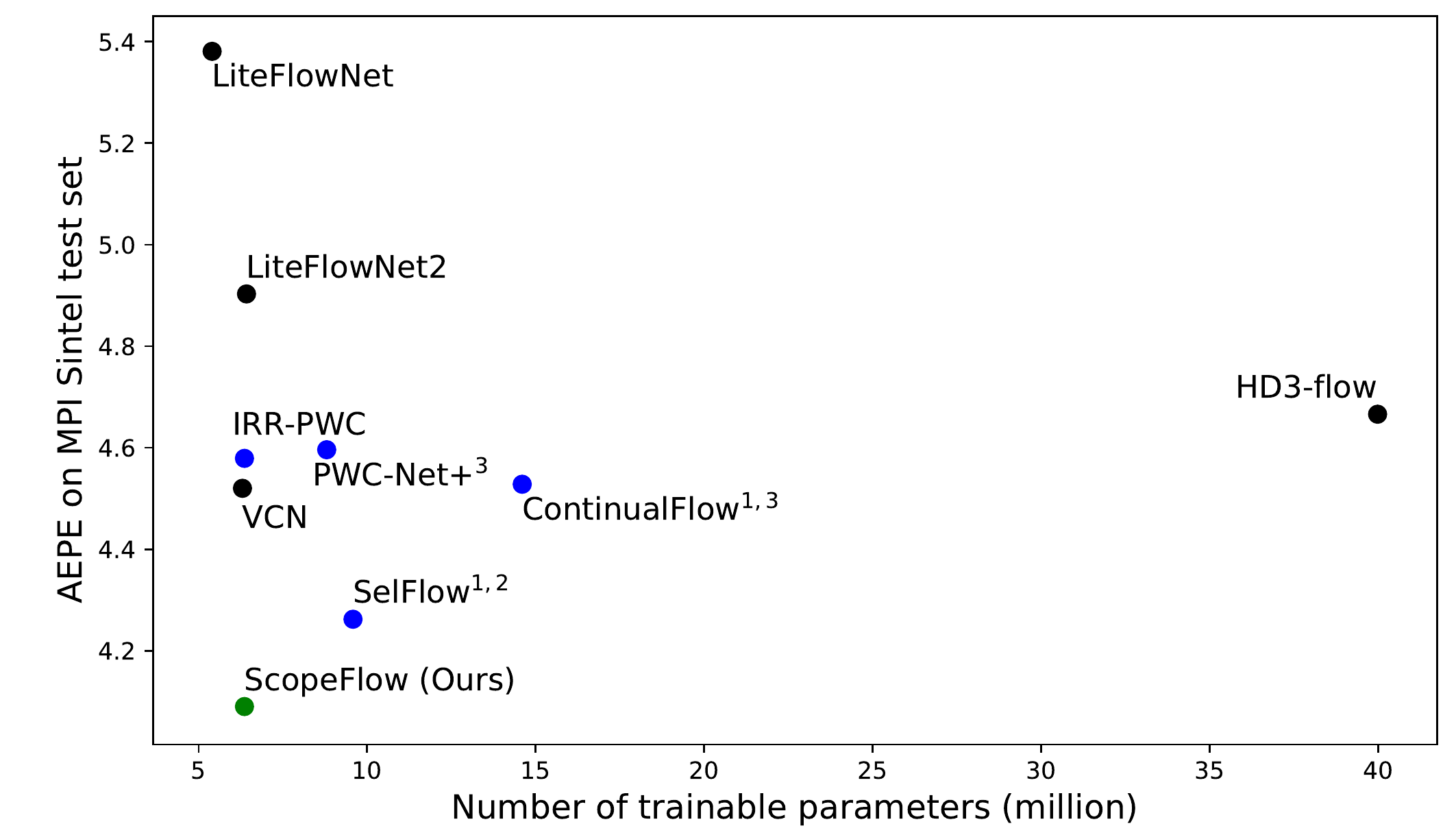}
    \caption{\textbf{Model size and accuracy trade-off.} Average-end-point-error of the leading methods on the MPI Sintel benchmark vs. the number of trainable parameters. PWC-Net based models are marked in blue. Our model is in the bottom left corner, achieving the best performance with low number of parameters during training.
    $^1$Methods that use more than two frames. $^2$SelFlow uses half of the parameters in test time. $^3$Trained with additional datasets.}
\end{center}
\label{fig:parameters_to_epe}
\end{figure}

To improve optical flow training, we ask the following questions: {Q1.} How do fixed size crops affect this training? {Q2.} What defines a good set of scopes for optical flow? {Q3.} Should regularization be relaxed after pretraining?

Our experiments employ the smallest PWC-Net~\cite{Sun2018PWC-Net} variant of Hur \& Roth~\cite{Hur2019CVPR}, with only 6.3M trainable parameters, in order to support low memory, real time processing. We demonstrate that by answering these questions and contributing to the training procedure, {\color{black}it is possible to train} a dual frame, monocular and small sized model to outperform all other models on the MPI Sintel benchmark. The trained model improves the accuracy of the baseline model, which uses the same architecture, by 12\%. See Fig.~\ref{fig:parameters_to_epe} for a comparison to other networks.

Moreover, despite using the smallest PWC-Net variant, our model outperformed all other PWC-Net variants on both KITTI 2012 and KITTI 2015 benchmarks, improving the baseline model results by 12.2\% and 19.7\%
on the public test set, and demonstrating once more the power of using the improved training protocol. 

Lastly, albeit no public benchmark is available for occlusion estimation, we compared our occlusion results to other published results on the MPI Sintel dataset, demonstrating more than 5\% improvement of the best published F1 score. 

Our main contributions are:
(i) showing, for the first time, as far as we can ascertain, that CNN training for optical flow and occlusion estimation can benefit from cropping randomly sized scene scopes, (ii) exposing the powerful effect of regularization and data augmentation on CNN training for optical flow and (iii) presenting an updated generally applicable training scheme and testing it across benchmarks, on the widely used PWC-Net network architecture.

Our code is \href{https://github.com/avirambh/ScopeFlow}{\textit{available online}} and our models are openly shared, in order to encourage follow-up work, to support reproducibility, and to provide an improved performance to off the shelf real-time models.

\section{Related work}
\label{related_works}
The deep learning revolution in optical flow started with deep descriptors~\cite{6751282,Gadot_2016_CVPR,bailer_patch_match} and densification methods~\cite{Zweig_2017_CVPR}. Dosovitskiy \etal~\cite{7410673} presented FlowNet, the first deep end-to-end network for optical flow dense matching, later improved by Ilg~\etal~\cite{IMSKDB17}, incorporating classic approaches, like residual image warping. Ranjan \& Black~\cite{spynet2017} showed that deep model size can be much smaller with a coarse to fine pyramidal structure. 
Hui \etal~\cite{hui18liteflownet,hui19liteflownet2} suggested a lighter version for FlowNet, adding features matching, pyramidal processing and features driven local convolution. Xu \etal~\cite{xu_2017_dcflow} adapted semi-global matching~\cite{4359315} to directly process a reshaped 4D cost volume of features learned by CNN, inspired by common practices in stereo matching. Yang \& Ramanan~\cite{yang2019vcn} suggested a method for directly learning to process the 4D cost volume, with a separable 4D convolution. Sun~\etal~\cite{Sun2018PWC-Net} proposed PWC-Net, which includes pyramidal processing of warped features, and a direct processing of a partial layer-wise cost volume, demonstrated strong performance on optical flow benchmarks. Many extensions were suggested to the PWC-Net architecture, among them multi-frame processing, occlusion estimation, iterative warping and weight sharing~\cite{ren2018fusion,Neoral2018ACCV,Liu:2019:SelFlow,Hur2019CVPR}.

\noindent \textbf{Pretraining optical flow models\quad}
Today's leading optical flow learning protocols, include  pretraining on large scale data. The common practice is to pretrain on the FlyingChairs~\cite{7410673} and then on FlyingThings3D~\cite{things3d} (FChairs and FThings). As shown by recent works~\cite{Mayer_2018,IMSKDB17}, the multistage pretraining ordering is critical. The FChairs dataset includes 21,818 pairs of frames, generated by CAD models~\cite{seeing_3d_chairs}, with flicker images background. FThings is a natural extension of the FChairs dataset, having 22,872  larger 3D scenes with more complex motion patterns. Hur \& Roth~\cite{Hur2019CVPR} created a version of FChairs with ground truth occlusions, called FlyingChairsOcc (denoted FChairsOcc), to allow supervised pretraining on occlusion labels.

\noindent \textbf{Datasets and benchmarks\quad} 
The establishment of larger complex benchmarks, such as MPI Sintel~\cite{Butler:ECCV:2012} and KITTI~\cite{Geiger2012CVPR,Menze2015CVPR}, boosted the evolution of optical flow models. The MPI Sintel dataset was created from the Sintel movie, composed of 25, relatively long, annotated scenes, with 1064 training frames in total. The final pass category of Sintel is a challenging one, having many realistic effects to mimic natural scenes. The KITTI2012 dataset comprises 194 training pairs with annotated flow maps, while KITTI2015 has 200 dynamic color training pairs. Furthermore, some methods are using more datasets during the finetune process, such as HD1K~\cite{kondermann2016hci}, Driving and Monkaa~\cite{things3d}.

\noindent \textbf{Motion categories\quad}
MPI Sintel provides a stratified view of the error magnitude of challenging motion patterns. The ratio of the best mean error for the small motion category (slower than 10 pixels per frame) to the large motion category (faster than 40 pixels per frame) is approximately x44. In Sec.~\ref{method:scene_scoping}, we present one possible theoretical explanation for the poor performance of state of the art methods in large motions, and suggest an approach to improve the accuracy of this pixels category.

Another example is the category of unmatched pixels. This category includes pixels belonging to regions that are visible only in one of two adjacent frames (occluded pixels). As expected, these pixels share much higher end-point-error than match-able pixels: the ratio of the best match-able EPE to the best non match-able is approximately 9.5.

Different deep learning approaches were suggested to tackle the problems of fast objects and occlusion estimation. Among the different solutions suggested were: occlusion based loss~\cite{occlusion_sep} and model~\cite{Liu:2019:DDFlow,Liu:2019:SelFlow} separation, and multi-frame support for long-range, potentially occluded, spatio-temporal matches~\cite{ren2018fusion,Neoral2018ACCV}. We suggest a new approach for applying multiple strategies online. Our findings imply that the training can be improved by applying scene scope variations, while taking into account the probability of sampling valid examples from different flow categories.

\noindent \textbf{Training procedure and data augmentation\quad}
Fleet \& Weiss~\cite{Fleet2006OpticalFE} showed the importance of photometric variations, boundary detection and scale invariance to the success of optical flow methods. 
In recent years, the importance of the training choices attracted more attention~\cite{Liu:2019:SelFlow}.  Sun~\etal~\cite{Sun2018:Model:Training:Flow} used training updates to improve the accuracy of the initial PWC-Net model by more than 10\%, showing they could improve the reported accuracy of FlowNetC (a sub network of FlowNet) by more than 50\%, surpassing FlowNet2~\cite{hui19liteflownet2} performance, with their updated training protocol. Mayer \etal~\cite{Mayer_2018} suggests that no single best general-purpose training protocol exists for optical flow, and different datasets require different care. These conclusions are in line with our findings on the importance of proper training.

\subsection{PWC-Net architectures}
\label{related_works:pwc}
PWC-Net~\cite{Sun2018PWC-Net} is the most popular architecture for optical flow estimation to date, and many variants for this architecture were suggested~\cite{ren2018fusion,Melekhov+Tiulpin+Sattler+Pollefeys+Rahtu+Kannala:2018,Hur2019CVPR,Liu:2019:SelFlow,Neoral2018ACCV}. PWC-Net architecture was built over traditional design patterns for estimating optical flow, given two temporally consecutive frames, such as: pyramidal coarse-to-fine estimation, cost volume processing, iterative layerwise feature warping and others. 
\noindent \textbf{Features warping\quad} In PWC-Net, a CNN encoder creates feature maps for the different network layers (scales). The features of the second image are backward warped, using the upsampled flow of the previous layer processing, for every layer $l$, except the last layer $l_{Top}$, by:
\begin{equation}
    \label{warping}
    c_{w}^{l}(x) = c_{2}^{l}(x + up_{\times2}(f^{l+1}(x))
\end{equation}
where x is the pixel location, $c_{w}^{l}(x)$ is the backward warped feature map of the second image, $f^{l+1}(x)$ is the output flow of the coarser layer, and $up_{\times2}$ is the $\times2$ up-sampling module, followed by a bi-linear interpolation.

\noindent \textbf{Cost volume decoding\quad} A correlation operation applied on the first and backward warped second image features, in order to construct a cost volume:
\begin{equation}
\label{eq:correlation}
    cost^{l}(x_{1}, x_{2}) = \frac{1}{N}(c_{1}^{l}(x_{1}))^{T}c_{w}^{l}(x_{2})
\end{equation}
{where $c_{n}^{l}(x)\in \mathbb{R}^N$ is a feature vector of image n.}

The cost volume is then processed by a CNN decoder, in order to estimate the optical flow directly. In some variants of PWC-Net~\cite{Neoral2018ACCV,Hur2019CVPR} there
is an extra decoder with similar architecture for occlusion estimation.

\noindent \textbf{Iterative residual processing\quad} 
Our experiments employ the Iterative Residual Refinement proposed by Hur \& Roth~\cite{Hur2019CVPR}. The reasons we chose to test our changes for the PWC-Net architecture on the IRR variant are: 
(i) IRR has the lowest number of trainable parameters among all PWC-Net variants, making a state of the art result obtained with proper training more significant, (ii) it uses shared weights that could be challenged with scope and scale changes, and if successful, it would demonstrate the power of a rich, scope invariant feature representations, (iii) this variant is using only two frames - demonstrating the power of dynamic scoping without long temporal connections, and (iv) the occlusion map allows the direct evaluation of our training procedure on occlusion estimation. Therefore, any success with this variant directly translates to real-time relatively low complexity optical flow estimation.

\section{Scene scoping analysis}

\begin{figure}[t]
    \centering
    \includegraphics[width=0.99\linewidth]{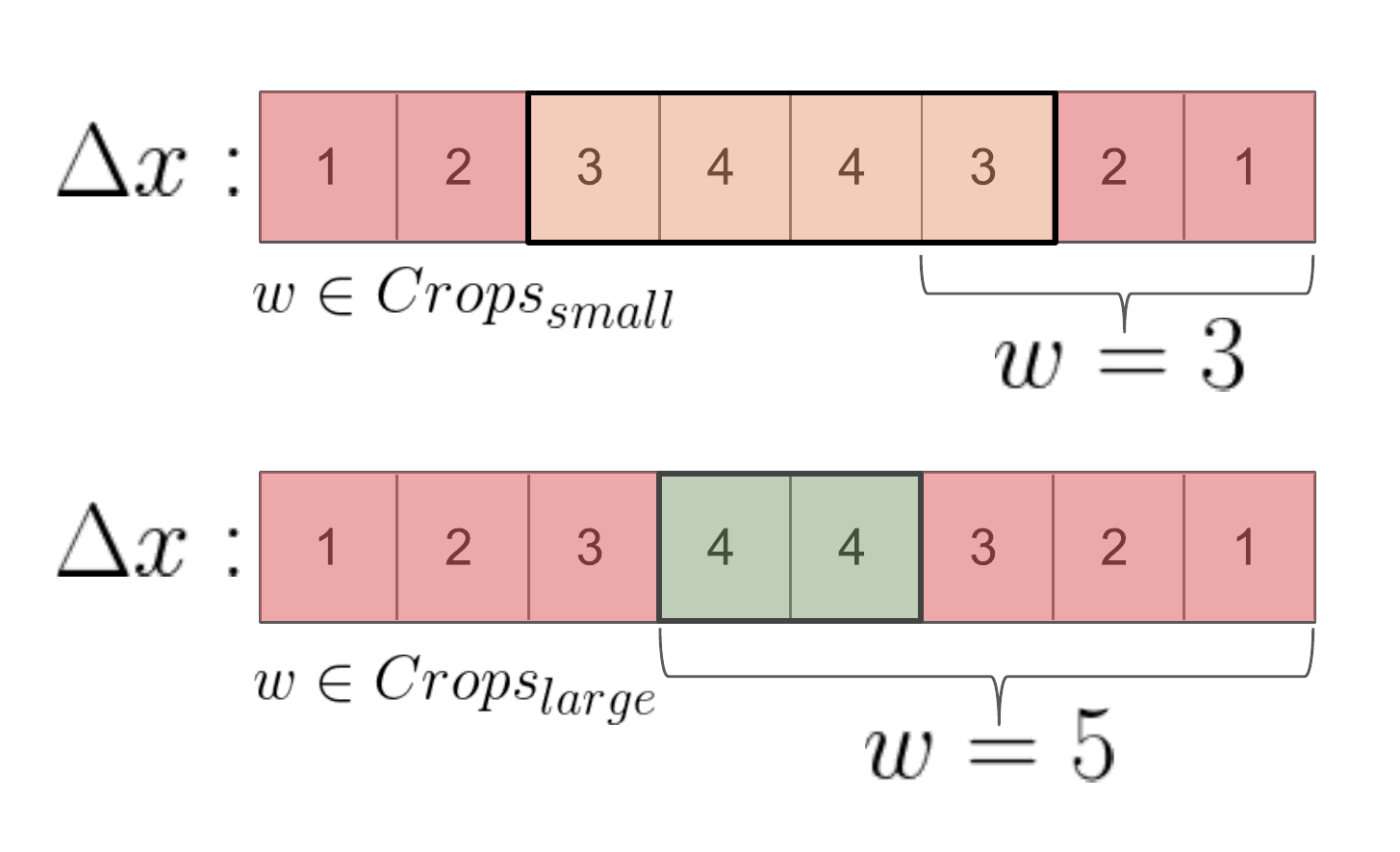}
    \caption{{\bf Illustration of Lemma 1}. The probability for a pixel to be sampled within a valid random crop location, depends on the image width $W$, the crop width $w$ and the distance to the closest border $\Delta x$. For both samples $W=8$. Top: $w=3$ ($w \in Crops_{small}$). Bottom: $w=5$ ($w \in Crops_{large}$). Each pixel is labeled with $\Delta x$.}
\label{fig:1dprob}
\end{figure}

\label{method:scene_scoping}
Due to the limited number of labeled examples available for optical flow supervised learning, most of the leading methods, in both supervised and unsupervised learning, are using cropping of a \textbf{fixed sized} randomly located patches. We are interested in understanding the chances of a pixel to be sampled, within a randomly located fixed size crop, as a function of its location in the image.

\noindent \textbf{1D image random cropping statistics\quad}
Consider a 1D image with a width $W$, a crop size $w$ and a pixel location $x$.
Let $\Delta x$ denote the distance of the pixel from the closest border, and $\Delta w$ denote the difference between the image width $W$ and the crop size $w$. Let $Crops_{large}$ be the set of crop sizes with $w$ larger than half of the width, $\frac{W}{2} < w\leq W$. Let $Crops_{small}$ be the complement set of crop sizes smaller or equal to half of the width, $0 < w\leq \frac{W}{2}$. Two instances of this setup are depicted in Fig.~\ref{fig:1dprob}.

Using the notations above, pixels are separated into three different categories, described in the following lemma. 
\begin{lemma}
For an image size $W$ and a randomly chosen crop of size $0<w \leq W$ the probability of a pixel, with coordinate $x$ and distance to the closest border $\Delta x$ to be sampled, is as follows:
\begin{equation}
    P(x| W,w) =
    \begin{cases*}
      1 & if $\Delta w<\Delta x$  \\ 
      \frac{w}{\Delta w + 1} & if $ w\leq\Delta x$ \\ 
      \frac{\Delta x}{\Delta w + 1} & otherwise 
    \end{cases*}
\label{casesprob}
\end{equation}
where $\Delta w + 1$ is the number of valid crops. 
\end{lemma}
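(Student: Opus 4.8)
My plan is to count, for a fixed pixel at coordinate $x$, the number of valid crop placements that contain it, and divide by the total number of valid placements. A crop of width $w$ in an image of width $W$ can be placed with its left endpoint at any integer position $s \in \{0, 1, \dots, W-w\}$, so there are $W - w + 1 = \Delta w + 1$ valid crops total; this gives the denominator asserted in the lemma. The pixel $x$ is contained in the crop starting at $s$ precisely when $s \le x \le s + w - 1$, i.e. when $x - w + 1 \le s \le x$. The number of admissible $s$ is therefore the size of the intersection of the interval $[\max(0, x-w+1), \min(W-w, x)]$ with the integers, and the whole proof reduces to evaluating this intersection length in the three regimes of $\Delta x$.

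**The three cases.**

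By symmetry I may assume $x$ lies in the left half, so $\Delta x = x$ (the right-half case is identical after reflecting $s \mapsto W - w - s$). Then $\min(W-w, x) = x$ whenever $x \le W - w = \Delta w$, and the count of valid $s$ containing the pixel is $x - \max(0, x-w+1) + 1$. First, if $\Delta w < \Delta x$, i.e. $\Delta x = x > W - w$, then every one of the $\Delta w + 1$ crops must contain $x$ (a crop of width $w > W - \Delta x$ is forced to overlap a point that far from the border), so the probability is $1$. Second, if $w \le \Delta x \le \Delta w$, then $x - w + 1 \ge 1 > 0$, so $\max(0, x-w+1) = x-w+1$ and the count is exactly $w$, giving $P = w/(\Delta w + 1)$. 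Third, in the remaining "otherwise" case $\Delta x < w$ and $\Delta x \le \Delta w$, we have $x - w + 1 \le 0$, so $\max(0, x - w+1) = 0$ and the count is $x + 1 = \Delta x + 1$; here one should note the lemma's stated numerator $\Delta x$ (rather than $\Delta x + 1$) corresponds to measuring the distance with an off-by-one convention consistent with how $\Delta w$ and "number of valid crops $=\Delta w+1$" are bookkept, so I would fix the indexing convention up front (e.g. treat $\Delta x$ as the number of pixels strictly between the pixel and the border, matching $\Delta w = W - w$) and then the count is cleanly $\Delta x$.

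**Checking consistency and the figure.**

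To make the argument airtight I would verify the cases tile the parameter space: for $w \in Crops_{large}$ one has $\Delta w < w$, so the middle case $w \le \Delta x \le \Delta w$ is empty and only the first and third cases occur; for $w \in Crops_{small}$ one has $\Delta w \ge w$ and all three regimes are nonempty. I would also sanity-check the boundary $\Delta x = \Delta w$ against both the first formula ($1$) and the appropriate branch of the other two, confirming they agree there, and likewise at $\Delta x = w$. Finally I would point to Fig.~\ref{fig:1dprob}: with $W = 8$, $w = 3$ the border pixels ($\Delta x$ small) sit in case three and the central pixels in case two, whereas with $w = 5$ no pixel falls in case two and the innermost pixels are forced ($P = 1$), matching the two panels.

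**Main obstacle.**

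The mathematics here is elementary interval arithmetic; the only real pitfall is the off-by-one bookkeeping — whether "distance to the border" counts endpoints, whether crop positions are indexed $0,\dots,W-w$ or $1,\dots,W-w+1$, and reconciling the stated numerator $\Delta x$ in case three with the literal count $\Delta x + 1$. I would resolve this once at the start by pinning down the exact meaning of $\Delta x$ and $\Delta w$ so that "number of valid crops $= \Delta w + 1$" holds, after which all three formulas drop out without further subtlety. The only other thing worth a sentence is the symmetry reduction to the left half, which is immediate from the reflection $s \mapsto (W-w) - s$ on the set of crop start positions.
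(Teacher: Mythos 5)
Your proof is correct and follows essentially the same route as the paper's: count the crop start positions that cover the pixel and divide by the $\Delta w + 1$ valid positions, splitting into the same three regimes. Your explicit handling of the indexing convention (so that the third case counts exactly $\Delta x$ placements, consistent with the figure's labeling of border pixels starting at $\Delta x = 1$) is a more careful version of the bookkeeping the paper leaves implicit, but the underlying argument is identical.
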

\begin{proof}
For illustration purposes, the three cases are color coded, respectively, as green, orange, and red, in Fig.\ref{fig:1dprob}. We handle each case separately.
(i) \textbf{Green}: Every valid placement must leave out up to $\Delta w$ pixels from the left or the right. Since $\Delta x$ is larger than $\Delta w$, the pixel $x$ must be covered. (ii) \textbf{Orange}: In this case, there are $w$ possible locations for pixel $x$ within the patch, all of which are achievable, since $\Delta x$ is large enough. Therefore, $w$ patch locations out of all possible patches contain pixel $x$. (iii) \textbf{Red}: In this case, the patch can start at any displacement from the edge that is nearer to $x$, that is not larger than $\Delta x$, and still cover pixel $x$. Therefore, there are exactly $\Delta x$ locations out of the $\Delta w + 1$ possible locations.
\end{proof}

\newpage
\noindent \textbf{2D image random cropping statistics\quad}
Since the common practice in current state-of-the-art optical flow training protocols is to crop a fixed sized crop, in the range
$\frac{W}{2}<w\leq W$ \((w \in Crops_{large})\),
we will focus in the reminder of this section on the green and red categories, which are the relevant categories for crop sizes with each dimension [h,w] larger than half of the corresponding image dimension (\ie in $Crops_{large}$), and represent a cropping of more than a quarter of the image.

From the symmetry of our 1D random cropping setup, in both $x$ and $y$ axes, we can use Eq.~\ref{casesprob} in order to calculate the probability of sampling pixels in a 2D image of size $[H,W]$, with a randomly located crop of a fixed size $[h,w]$. The probability of sampling a central (green) pixel remains 1, while the probability of sampling a marginal (red) pixel $(x,y)$ in 2D, is given by:
\begin{equation}
\label{eq:2d_margin_prob}
P_{red}(x,y | H,h,W,w) = \frac{\min{(\Delta x, \Delta w)}\min{(\Delta y, \Delta h)}}{(\Delta w + 1)(\Delta h + 1)}
\end{equation}
Where $\Delta h = H-h$, $\Delta w = W-w$ the difference between the image and the crop width and height, and $\Delta x, \Delta y$ represent the distance from the closest border, as before. Eq.~\ref{eq:2d_margin_prob} represents the ratio between the number of crop locations where a (marginal) pixel with $\Delta x, \Delta y$ is sampled to the number of all unique valid crop locations. An illustration of this sampling probability is demonstrated in Fig.~\ref{fig:pixel-prob-per-crop-size} for varying ratios of crop size axes and image axes.

\noindent \textbf{Fixed crop size sampling bias\quad}
As in the 1D cropping setup, given an image of size $[H,W]$ and a crop size $[h,w]$, we can define a central area (equivalent to the green pixels in 1D), which will always be sampled. Respectively, we can define a marginal area (equivalent to the red pixels in 1D), where Eq.~\ref{eq:2d_margin_prob} holds.

Analyzing Eq.~\ref{eq:2d_margin_prob} we can infer the following: (i) in the marginal area, for a fixed crop size $[h,w]$, the probability of being sampled decreases quadratically along the image diagonal, when $\Delta x$ and $\Delta y$ both decrease together, and (ii) in the marginal area, for a fixed pixel, the probability of being sampled decreases quadratically when the crop size decreases (when $\Delta w$ and $\Delta h$ both decrease together).

Therefore, when using a fixed sized crop to augment a dataset with a random localization approach, there will be a dramatic sampling bias towards pixels in the center of the image, preserved by the symmetric range of random affine parameters. For example, with the current common cropping approach for the MPI Sintel data-set, the probability of the upper left corner pixel to be sampled in a crop equals \(\frac{1}{(1024-768 + 1)(436 - 384 + 1)} = 0.000073\%\), while the pixels in the central $[332,512]$ crop will be sampled in any randomized crop location.

This sampling bias could have a sizable influence on the training procedure. Fig.~\ref{fig:pixel-prob-per-crop-size} illustrates the distribution of fast pixels in both MPI Sintel and KITTI datasets. Noticeably, pixels of fast moving objects (with speed larger than 40 pixels per frame) are often located at the marginal area, while slower pixels are more centered in the scene. This should not be a surprise, since (i)  lower pixels belong to nearer scene objects and thus have a larger scale and velocity, and (ii) fast moving objects usually cross the image borders.

Moreover, many occluded pixels are also located close to the image borders. Therefore, increasing a crop size could also help to observe a more representative set of occlusions during training. Therefore, we hypothesized that larger crops can also improve the ability to infer occluded pixels motion from the context. 

\subsection{Scene scoping approaches}
\label{method:scoping_approaches}
Fig.~\ref{fig:pixel-prob-per-crop-size} shows the crop size effect on the probability to sample different motion categories. Clearly, the category of fast pixels suffers the most from reduction of the crop sizes. We tested four different strategies for cropping the scene dynamically (per mini batch) during training:
(S1) fixed partial crop size (the common approach), (S2) cropping the largest valid crop size, {\color{black}(S3) randomizing crop size ratios from a pre-defined set with:
    \begin{subequations}
        \begin{equation}
            \label{eq:rcrop_size_fixed}
            R_{fixed} = \{(0.73,0.69),(0.84,0.86),(1,1)\}
        \end{equation}
        \begin{equation}
                \label{eq:rcrop_size_set}
                (r_{h},r_{w})= 
                randchoice(R_{fixed}),
        \end{equation}
    \end{subequations} where $(r_{h},r_{w})$ are one of the three crop ratios, 
    and strategy (S4) is a range-based crop size randomization:
    \begin{equation}
        \label{eq:rcrop_size_range}
        s=randint(round(r_{min} \cdot S), round(r_{max} \cdot S)),
    \end{equation}
    where s is the crop axis size (h or w), S is the full image axis size (H or W), and [$r_{min}$, $r_{max}$] is the range of crop size ratios $\frac{s}{S}$ for sampling.
}

We also employ different affine transformations, and dynamically change the zooming range along the training, to enlarge the set of possible scene scopes, and improve the robustness of features to different scales. In Sec.~\ref{experiments:full_train} we describe the experiments done in order to find an appropriate approach for feeding the network with a diversity of scene scopes and reducing the inherent sampling bias explained in this section, caused by the current cropping mechanisms.

\begin{figure*}
\begin{center}
  \includegraphics[width=0.99\linewidth]{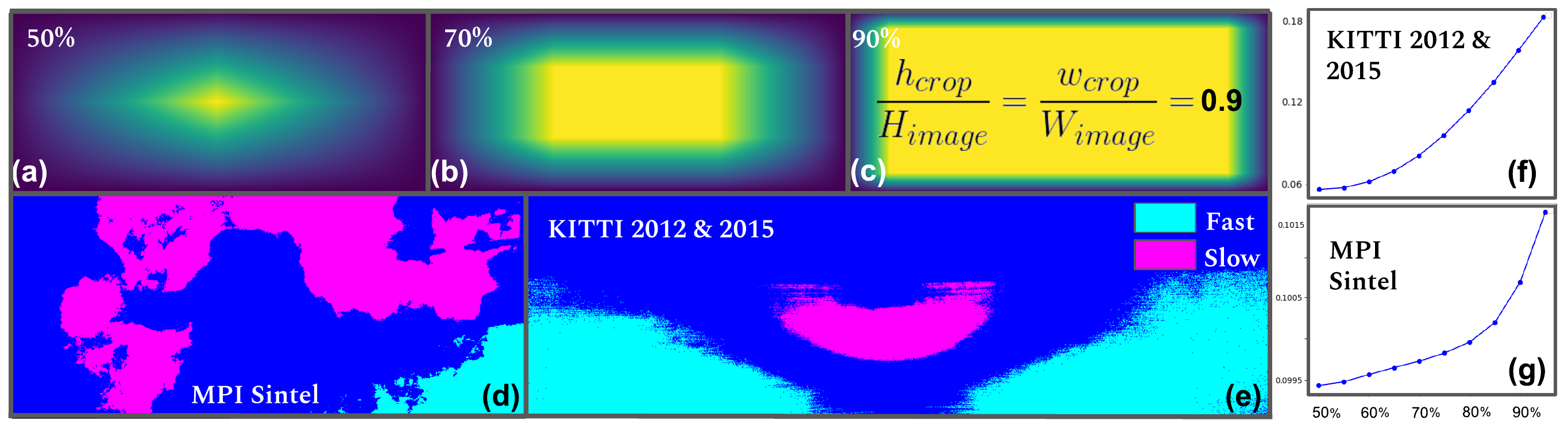}\vspace{-.4cm}
\end{center}
   \caption{\textbf{Sampling bias caused by fixed size random crops.} {\color{black}(a),(b),(c): Pixel sampling probability maps for a fixed sized crop, with ratios of 50\%, 70\% and 90\% respectively, for each axis. The probability to sample a marginal pixel shrinks drastically with the crop size. (d),(e): areas with strong prevalence for motion categories. High velocities tend to start from lower corners, while small ones tend to occur in the middle and upper part of the scene. (f),(g): graphs of the changing ratio of sampling probabilities between fast ($>40$) and slow ($<10$) pixels, for different crop and image axes ratios. clearly, fast pixels benefit more when increasing the crop size than slow pixels.}}
\label{fig:pixel-prob-per-crop-size}
\end{figure*}

\begin{figure}
    \begin{center}
    \includegraphics[width=0.98\linewidth]{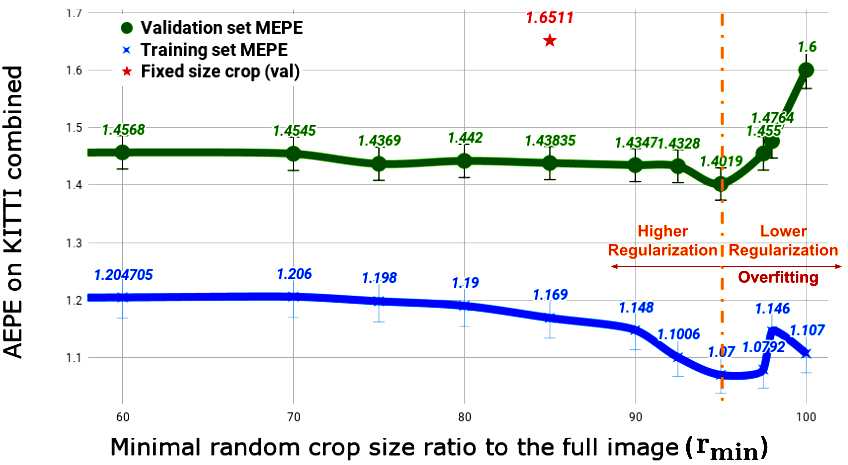}
    \end{center}
    \caption{Accuracy of models trained with different ranges of random crop sizes, on the combined KITTI dataset. The maximal crop size is the full image. Validation AEPEs improve when increasing the minimal crop size ratio ($r_{min}$ in Eq.~\ref{eq:rcrop_size_range}) up to 95\% of the full image axes. AEPE for a fixed sized crop based training: {\color{red}$\ast$}.}
\label{fig:min_crop_comparison}
\end{figure}

In addition to testing the scope modifications based on our analysis, we were also interested in testing different parameters of the training.

\section{Training, regularization and augmentation}
\label{method:training}

\noindent \textbf{Learning rate and training schedules\quad}
\label{method:lr_sched}
The common LR schedules, proposed by Ilg~\etal~\cite{IMSKDB17}, used to train deep optical flow networks, are $S_{long}$ or $S_{short}$ for the pretraining stage, and $S_{ft}$ for the finetune phases. We used the shorter schedule, suggested by~\cite{Hur2019CVPR}, of using $S_{short}$ for pretraining, half of $S_{short}$ for FThings finetuning, and $S_{ft}$ for Sintel and KITTI datasets. We also examine the effect of retraining and over-training specific stages {\color{black}of the multi-phase training.}

\noindent \textbf{Data augmentation\quad}
\label{method:augmentation}
The common practice in the current training protocol employs two types of data augmentation techniques: photometric and geometric. The details of these augmentations did not change much since FlowNet~\cite{7410673}. The photometric transformations include input image perturbation, such as color and gamma corrections. The geometric augmentations include a global or relative affine transformation, followed by random horizontal and vertical flipping. Due to the spatial symmetric nature of the translation, rotation and flipping parameters, we decided to focus on the effect of zooming changes, followed by our new cropping approaches.

\noindent \textbf{Regularization\quad}
\label{method:reg}
The common protocol also includes weight decay and adding random Gaussian noise to the augmented image. In our experiments, we tested the effect of eliminating these sources of regularization at different stages of the multi-phase training.

\section{Experiments}
\label{experiments}

\begin{small}
\begin{table}[]
     \resizebox{\linewidth}{!}{%
    \centering
    \begin{tabular}{@{}c@{~}c@{~}c@{~}c@{~}|@{~}c@{~}c@{~}c@{~}c@{~}c@{}}
    \hline
    {\begin{tabular}[c]{@{}c@{}}RD+\\ RN\end{tabular}} & {\begin{tabular}[c]{@{}c@{}}Zoom\\ changes\end{tabular}} & {\begin{tabular}[c]{@{}c@{}}Max\\ crop\end{tabular}} & {\begin{tabular}[c]{@{}c@{}}Random\\ crop\end{tabular}} & {\begin{tabular}[c]{@{}c@{}}Sintel\\ train\end{tabular}} & {\begin{tabular}[c]{@{}c@{}}Sintel\\ val\end{tabular}} & 
    {\begin{tabular}[c]{@{}c@{}}KITTI\\ train\end{tabular}} & {\begin{tabular}[c]{@{}c@{}}KITTI\\ val\end{tabular}} & {\begin{tabular}[c]{@{}c@{}}KITTI\\ val Out\%\end{tabular}}  
    \\ \hline
    x & x & x & x & 2.660 & 3.312 & 1.728 & 1.651 & 0.057 \\
    \checkmark &  &  &  & 2.623 & 3.224 & 1.654 & 1.644 & 0.056 \\
    \checkmark & \checkmark &  &  & 2.453 & 3.108 & 1.580 & 1.649 & 0.056 \\
    \checkmark &  & \checkmark &  & 2.428 & 3.053 & 1.182 & 1.594 & 0.059 \\
    \checkmark &  &  & \checkmark & 2.537 & 3.081 & \textbf{1.070} & \textbf{1.402} & \textbf{0.051} \\
    \checkmark & \checkmark & \checkmark &  & \textbf{2.320} & 2.987 & 1.225 & 1.607 & 0.059 \\
    \checkmark & \checkmark &  & \checkmark & 2.349 & \textbf{2.971} & 1.094 & 1.434 & \textbf{0.051} \\
\hline
    \end{tabular}
     }
    \caption{\textbf{Finetuning experiments.} Results were calculated with AEPE, except outlier percentage for KITTI validation.
    RD + RN is for removing random noise and weight decay. Zoom changes include an increased zoom out and a gradual reduction of the zoom in. Max crop is for using the maximal valid crop size for a batch. Random crop is for using Eq.~\ref{eq:rcrop_size_range} when sampling the crop size. \label{tab:ablation} }
   \smallskip

\centering
    \begin{tabular}{@{}l@{~~}c@{~~}c@{~~}c@{~~}c@{}}
    \toprule
    Model & Max zoom & WD+RN & VAL MEPE & Sintel MEPE \\ \midrule
    C* & 1.5 & \checkmark & - & \textbf{3.138} \\
    C1 & 1.5 & x & 1.622 & 3.264 \\
    C2 & 1.3 & x & \textbf{1.597} & 3.321 \\ \bottomrule
    \end{tabular}
    \caption{\textbf{Removing regularization in pretraining.} Models trained 108 epochs, from initialized weights, without weight decay and random noise, for two maximal zoom values, on FChairsOcc.\label{tab:chairs_training}} 
    \smallskip

\centering
    \begin{tabular}{lccccc}
    \toprule
    {\begin{tabular}[c]{@{}l@{}}Things\\ Model\end{tabular}} & {\begin{tabular}[c]{@{}c@{}}WD+\\ RN\end{tabular}} & {\begin{tabular}[c]{@{}c@{}}Start\\ From\end{tabular}} & {Epochs} & {\begin{tabular}[c]{@{}c@{}}Val\\ MEPE\end{tabular}} & {\begin{tabular}[c]{@{}c@{}}Sintel\\ MEPE\end{tabular}} \\ \midrule
    T2 & \checkmark & C* & 109-159 & 1.843 & 2.613 \\
    T3 & \checkmark & C1 & 109-159 & 1.829 & \textbf{2.544} \\
    T4 & x & T3 & 159-165 & \textbf{1.817} & 2.545 \\ \bottomrule
    \end{tabular}
    \caption{The negative effect of over-training and reducing regularization on early stages. T3 and T4 were trained with larger scopes.\label{tab:things_over_train}}
    \smallskip
\centering
    \begin{tabular}{lccc}
    \toprule
    KITTI model & Start from & Val MEPE & Outliers \\ \midrule
    T2\_K & T2 & \textbf{1.474} & 0.054 \\
    T3\_K & T3 & 1.475 & \textbf{0.053} \\ \bottomrule
    \end{tabular}
    \caption{Higher gains in early stages do not always translate to fine-tune gains. All models trained on KITTI combined.
    \label{tab:kitti_pre_train_effect}}
    \smallskip
\centering
    \begin{tabular}{lccc}
    \toprule
    Sintel model & WD+RN & Val MEPE & Val OCC F1 \\ \midrule
    T3\_SC1 & \checkmark & 2.119 & 0.700 \\
    T3\_SC2 & x & \textbf{2.108} & \textbf{0.703} \\ \bottomrule
    \end{tabular}
    \caption{The positive effect of reducing regularization in finetune. All models trained on Sintel combined, from T3.\label{tab:sintel_finetune_reg}}
\end{table}
\end{small}

In this section, we describe the experiments and results for our research questions. Specifically, we tested (i) how can we change the current training pipeline in order to improve the final accuracy, and (ii) the effect of feeding the network with different scopes of the input during training, using different cropping and zooming changes.

All our experiments on KITTI used both KITTI2012 and KITTI2015, and for Sintel both the clean and final pass, for training and validation sets. We denote the combined datasets of Sintel and KITTI as Sintel combined and KITTI combined.
We also tested the approach, suggested by Sun~\etal~\cite{Sun2018PWC-Net}, to first train on a combined Sintel dataset, followed by another finetune on the final pass.

All of our experiments employ the common End Point Error metric for flow evaluation, and F1 for occlusion evaluation. KITTI experiments also present outlier percentage.

\subsection{Finetuning a pretrained model}
\label{experiments:finetune}

\begin{small}
\begin{table}[]
    \centering
    \begin{tabular}{@{}l@{~}c@{~}c@{}}
    \hline
    {Method} & \multicolumn{1}{l}{{MEPE}} & {\begin{tabular}[c]{@{}c@{}}Outlier\\ \%(EPE \textgreater{}3 px)\end{tabular}} \\ \hline
    \#1: FP (320,896) & 1.651 & 0.057 \\
    \#2: FF (370,1224) & 1.594 & 0.059 \\
    \#4: RR {[}0.75,0.9{]} & 1.472 & 0.052 \\
    \#3: FR \{(0.73,0.69),(0.84,0.86),(1,1)\} & 1.466 & 0.053 \\
    \#4: RR {[}0.9,1{]} & 1.435 & 0.053 \\
    \#4: RR {[}0.95,1{]} & \textbf{1.402} & \textbf{0.051} \\ \hline
    \multicolumn{3}{l}{{Re-finetune}} \\ \hline
    \#2:  FF (370,1224) -\textgreater \#4: RR {[}0.95,1{]} & 1.421 & 0.052 \\
    \#4: RR {[}0.95,1{]} -\textgreater \#4: RR {[}0.95,1{]} & 1.393 & \textbf{0.051} \\
    \#4: RR {[}0.9,1{]} -\textgreater \#4: RR {[}0.95,1{]} & \textbf{1.377} & \textbf{0.051} \\ \hline
    \end{tabular}
    \caption{\textbf{Random cropping experiments.} Ranges are specified in $[]$, sets in $\{\}$, and fixed sizes in $()$. FP is fixed partial, FF is fixed full, RR is random range (Eq.~\ref{eq:rcrop_size_range}), FR is fixed range (Eq.~\ref{eq:rcrop_size_fixed}). Up: best results from each method described in Sec.~\ref{method:scoping_approaches} (the method number is on the left). Bottom: retraining experiments show that it is better to train a more regularized model in the first KITTI finetune, although it gets a lower MEPE on the first finetune.\label{tab:randcrop}
}
\smallskip
    \centering
    
    \begin{tabular} 
    {@{}l@{~}c@{~}c@{~}c@{~}c@{}}
    
    \toprule
    Method & Clean & Final & Mean & Type \\
    \midrule
    FlowNet2~\cite{IMSKDB17} & 0.377 & 0.348 & 0.362 & fwd-bwd \\
    MirrorFlow & 0.39 & 0.348 & 0.369 & CNN \\
    S2DFlow & 0.47 & 0.403 & 0.436 & CNN \\
    ContinualFlow~\cite{Neoral2018ACCV} & - & - & 0.48 & CNN \\
    SelFlow~\cite{Liu:2019:SelFlow} & 0.59 & 0.52 & 0.555 & fwd-bwd \\
    FlowDispOccBoundary~\cite{ISKB18} & 0.703 & 0.654 & 0.678 & CNN \\
    IRR-PWC~\cite{Hur2019CVPR} & 0.712 & 0.669 & 0.690 & CNN \\
    ScopeFlow (Ours) & \textbf{0.740} & {\textbf{0.711}} & \textbf{0.725} & CNN \\ \bottomrule
    \end{tabular}
    \caption{\textbf{Occlusion estimation comparison on Sintel.} Results were calculated with F1 score (higher is better).
    \label{tab:occlusion_results}}
\end{table}

\begin{table*}[t]
    \resizebox{\textwidth}{!}{%
    \begin{tabular}{@{}l@{~}c@{~}c@{~}c@{~}c@{~}c@{~}c@{~}c@{~}c@{~}c@{~}c@{~}c@{}}
    \toprule
    \multirow{2}{*}
     {{Method}} & 
       \multicolumn{7}{c}{{Sintel -- final pass}} & \multicolumn{1}{c}{{clean pass}} & \multicolumn{2}{c}{{KITTI -- 2012}} & \multicolumn{1}{c}{{2015}}   \\
      \cmidrule(lr){2-8}
      \cmidrule(lr){9-9}
     \cmidrule(lr){10-11}
          \cmidrule(lr){12-12}
      & 
      \multicolumn{1}{c}{all} & \multicolumn{1}{c}{matched} & \multicolumn{1}{l}{unmatched} & \multicolumn{1}{c}{d0-10} & \multicolumn{1}{l}{d60-140} & \multicolumn{1}{c}{s0-10} & \multicolumn{1}{l}{s40+} & \multicolumn{1}{c}{all} & \multicolumn{1}{c}{Out-All} & \multicolumn{1}{c}{Avg-All} & \multicolumn{1}{c}{Fl-all}   \\ \midrule
    FlowNet2 \cite{IMSKDB17} & 5.739 & 2.752 & 30.108 & 4.818 & 1.735 & 0.959 & 35.538 & 3.959 & 8.8 & 1.8 & 10.41 \\
     MR-Flow\cite{Wulff:CVPR:2017} & 5.376 & 2.818 & 26.235 & 5.109 & 1.755 & 0.908 & 32.221 & \textbf{2.527} & - & - & 12.19 \\
 DCFlow\cite{xu_2017_dcflow} & 5.119 & 2.283 & 28.228 & 4.665 & 1.44 & 1.052 & 29.351 & 3.537 & - & - & 14.86\\
    PWC-Net\cite{Sun2018PWC-Net}* & 5.042 & 2.445 & 26.221 & 4.636 & 1.475 & 2.986 & 31.07 & 4.386 & 8.1 & 1.7 & 9.6\\
    ProFlow\cite{Maurer2018ProFlowLT} & 5.017 & 2.596 & 24.736 & 5.016 & 1.601 & 0.91 & 30.715 & 2.709 & 7.88 & 2.1 & 15.04\\
    LiteFlowNet2\cite{hui19liteflownet2} & 4.903 & 2.346 & 25.769 & 4.142 & 1.546 & 0.797 & 31.039 & 3.187 & 6.16 & 1.4 & 7.62\\
    PWC-Net+\cite{Sun2018:Model:Training:Flow}*+ & 4.596 & 2.254 & 23.696 & 4.781 & 1.234 & 2.978 & 26.62 & 3.454 & 6.72 & 1.4 & 7.72\\
    HD3-Flow\cite{yin2019hd3} & 4.666 & 2.174 & 24.994 & \textbf{3.786} & 1.647 & 0.657 & 30.579 & 4.788 & \textbf{5.41} & 1.4 & 6.55\\
    IRR-PWC\cite{Hur2019CVPR}*\textasciicircum{} & 4.579 & 2.154 & 24.355 & 4.165 & 1.292 & 0.709 & 28.998 & 3.844 & 6.7 & 1.6 & 7.65\\
    MFF\cite{ren2018fusion}*+ & 4.566 & 2.216 & 23.732 & 4.664 & 1.222 & 0.893 & 26.81 & 3.423 & 7.87 & 1.7 & 7.17\\
    ContinualFlow\_ROB\cite{Neoral2018ACCV}*+ & 4.528 & 2.723 & \textbf{19.248} & 5.05 & 1.713 & 0.872 & 26.063 & 3.341 & - & - & 10.03\\
    VCN\cite{yang2019vcn} & 4.52 & 2.195 & 23.478 & 4.423 & 1.357 & 0.934 & 26.434 & 2.891 & - & - & \textbf{6.3}\\
    SelFlow\cite{Liu:2019:SelFlow}* & 4.262 & 2.04 & 22.369 & 4.083 & 1.287 & \textbf{0.582} & 27.154 & 3.745 & 6.19 & 1.5 & 8.42\\
    ScopeFlow*, regularization & 4.503 & 2.16 & 23.607 & 4.124 & 1.292 & 0.706 & 27.831 & 3.86 & - & - & - \\
    ScopeFlow*, zooming & 4.317 & 2.086 & 22.511 & 4.018 & 1.311 & 0.739 & 26.218 & 3.696 & - & - & -\\
    ScopeFlow* (Ours) & \textbf{4.098} & \textbf{1.999} & 21.214 & 4.028 & \textbf{1.18} & 0.725 & \textbf{24.477} & 3.592 & 5.66 & \textbf{1.3} & 6.82 \\
    \bottomrule
    \end{tabular}
    }
    \caption{\textbf{Public benchmarks results.} Models with comparable architecture (PWC-Net) are marked with *. Models using extra data in finetune are marked with +. Our baseline model is marked with \textasciicircum{}. We get the best EPE results in both Sintel and KITTI2012 benchmarks, surpassing all other comparable variants of our baseline model on KITTI2015, with a considerable improvement to our baseline.}
    \label{tab:results_sintel_kitti}
\end{table*}
\end{small}

Since the cost of pretraining is approximately $\times 7$ than of the final finetune, we first present experiments done on the finetuning phase, in which we employ models pretrained on FChairs and FThings, published by the authors of IRR-PWC. {\color{black}These experiments are conducted on the Sintel dataset, since it has similar statistics of displacements to the FChairs dataset~\cite{Mayer_2018} used for pretraining}. We tested different
training protocol changes, and found that substantial gains could be achieved using the following changes:
\noindent \textbf{1. Cropping strategies.} 
During the initial finetune, we tested the cropping approaches specified in Sec.~\ref{method:scoping_approaches} on Sintel. The results specified in Tab.~\ref{tab:ablation} show that the range-based crop size randomization approach (Eq.~\ref{eq:rcrop_size_range}) was comparable to taking the maximal valid crop (although much more efficient computationally), and both improved Sintel validation error of models trained with smaller fixed crop sizes.

\noindent \textbf{2. Zooming strategies.} 
We found that applying a new random zooming range of $[0.8,1.5]$ alone, which increases the zoom out, and gradually reducing the zoom in to $1.3$, achieved considerable gains for Sintel in all evaluation parameters, with and without cropping strategy changes. Interestingly, increasing the zoom out range without any change to the crop size provided 50\% of this gain. We suggest that this is additional evidence for the existing bias in small crop sizes, as explained in Sec.~\ref{method:scene_scoping}. 

\noindent \textbf{3. Removing artificial regularization.} 
Removing the random noise and weight decay helped us to achieve extra 2\%-3\% of improvement during Sintel finetune, demonstrating the benefit of reducing augmentation in advanced stages.

\subsection{Applying changes to the full training procedure}
\label{experiments:full_train}
We then tested {\color{black}the changes from Sec.~\ref{experiments:finetune}, along with all four cropping approaches described in Sec.~\ref{method:scoping_approaches},} on the different stages of the common curriculum learning pipeline. Since we wanted to test our training changes and compare our results to other variants of the baseline architecture, we decided not to use any other dataset, other than the common pretraining or benchmarking datasets. For FChairs and FThings, all trained models were evaluated on Sintel training images, as an external test set.

\noindent \textbf{FChairs pretraining\quad} For pretraining, we downloaded the newly released version of FChairs~\cite{Hur2019CVPR}, which includes occlusions ground truth annotations. We trained two versions of the IRR-PWC model on FChairsOCC, for 108 epochs on 4 GPUs: (i) C1: removing weight decay and random noise (ii) C2: same as (i) with reduced zoom in. We then evaluated both models and the original model, trained by the authors with weight decay and original zoom in of $1.5$, denoted by C*. Results are depicted in Tab.~\ref{tab:chairs_training}, showing that regularization is important in early stages, since removing either weight decay and random noise, or reducing the zoom-in hurt the performance. 

\noindent \textbf{FThings finetune\quad} We then trained three versions of IRR-PWC on FThings, for 50 epochs: (i) T2: resuming C* training with batch size of 2, with the original crop size of $[384,768]$, (ii) T3: resuming C1 with the maximal crop size, and (iii) T4: resuming T3 without weight decay and random noise. We can infer from the results in Tab.~\ref{tab:things_over_train}: (i) increasing the scope during FChairs training leads to better accuracy on the Sintel test set, and (ii) over-training without weight decay and random noise did not improve the results on the external test set (but did on the validation).

\noindent \textbf{KITTI finetune\quad} We trained two different versions, both with the same protocol, for 550 epochs on KITTI combined: (i) resuming T2 and (ii) resuming T3. Although T3 got better performance in the evaluation, after finetuning, both results were similar on KITTI validation, as shown in Tab.~\ref{tab:kitti_pre_train_effect}.

\noindent \textbf{Sintel finetune\quad} Two different versions were trained with the same protocol, for 290 epochs on Sintel combined, both from T3: (i) with weight decay and random noise and (ii) without weight decay and random noise. The results, presented in Tab.~\ref{tab:sintel_finetune_reg}, show that reducing regularization in Sintel finetune produced an extra gain.

\begin{figure*}[t]
\centering
  \includegraphics[width=0.99\textwidth]{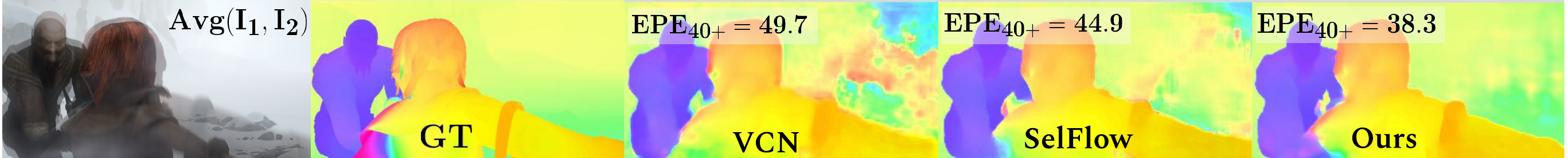}

   \caption{\textbf{Improving estimation for fast moving pixels.} {\color{black}A qualitative comparison with the other two leading methods on the Sintel benchmark. Images were downloaded from MPI Sintel website, evaluated online on a test image, for the category of fast pixels (40+). Left to right: averaged first and second image and flow visualization for each method. $EPE_{40+}$ is the end point error calculated on fast pixels.}}
\label{fig:occ-fast-improve}
\end{figure*}

\noindent \textbf{Dynamic scene scoping\quad}
Since the scoping approaches were already tested on Sintel during the initial finetune, we further tested the four different approaches for dynamic scene scoping, detailed in Sec.~\ref{method:scoping_approaches}, on the combined KITTI dataset. The results are depicted in Tab.~\ref{tab:randcrop}. For KITTI, cropping the maximal valid crop per batch shows noticeable improvement from using a fixed sized crop. However, for KITTI datasets, strategy S4 (Eq.~\ref{eq:rcrop_size_range}) shows much better performance than using the maximal valid crop size. In order to find the optimal range of crop size ratios (Eq.~\ref{eq:rcrop_size_range}), we trained different models with different ranges of crop size to image ratios $[r_{min},r_{max}]$. All models used an upper crop size ratio limit $r_{max}$ equal to 1 (\ie the maximal valid crop for the batch), and different lower limit $r_{min}$, ranging from $0.5$ to $1$ and representing random crop sizes with different aspect ratios, which are larger than a quarter of the image.

Fig.~\ref{fig:min_crop_comparison} shows the training and validation accuracy as a function of the lower ratio of the range of randomized crop sizes. Specifically, the best results obtained with $r_{min}$ equals the 0.95, as also demonstrated in Fig.~\ref{fig:min_crop_comparison}. The validation accuracy improves consistently when increasing $r_{min}$ from 0.5 to 0.95 and then starting to deteriorate until $r_{min}$ is reaching the maximal valid crop size. As can be seen, when enlarging the crop size expectation, we also reduce the regularization provided by the larger number of scopes (as analyzed in Eq.~\ref{eq:2d_margin_prob}). This observation can be considered as additional evidence of a regularization-accuracy trade-off in the training process. It also emphasizes the power of Eq.~\ref{eq:rcrop_size_range}, in improving the training outcome, while keeping the regularization provided by partial cropping.

\noindent \textbf{Re-finetune with dynamic scene scoping\quad}
In order to further understand the effect of this regularization-accuracy trade-off, we re-trained three models with the best random approach $([r_{min},r_{max}] = [0.95,1])$ on the KITTI combined set, using the same finetuning protocol. We took three different models, finetuned with $r_{min}\in \{0.9,0.95,1\}$, as the checkpoint for this second finetune.

As described in the lower part of Tab.~\ref{tab:randcrop}, finetuning again on the KITTI dataset improved the validation accuracy for all starting points (compared to their accuracy after the first finetune). Surprisingly, in the second finetune, repeating the best approach (of randomizing using Eq.~\ref{eq:rcrop_size_range} with $([r_{min},r_{max}] = [0.95,1])$) did not provide the best result. The best approach was to finetune for the second time from a model with a larger range $([r_{min},r_{max}] = [0.9,1])$, thus stronger regularization, but lower EPE in the first finetune. We propose to consider this as additional evidence for the notion that gradually reducing regularization in optical-flow training, helps to achieve a better final local minima.

\noindent \textbf{Full training insights\quad} Concluding Sec.~\ref{experiments:full_train} experiments, we suggest the following: (i) larger scopes can improve optical flow training {\color{black}as long as the regularization provided by small crops is not needed}, {\color{black} (ii) range based crop size randomization (Eq.~\ref{eq:rcrop_size_range}) is a good strategy when regularization is needed, }(iii)  strong regularization is required on early stages, and should be relaxed when possible, and (iv) gains on early stages do not always improve the final accuracy.

\subsection{Occlusion estimation} 
We evaluated the occlusion estimation of our trained models, using the F1 score, during all stages of the full training. As demonstrated in Tab.~\ref{tab:sintel_finetune_reg}, it appears that gains in optical flow estimation are highly correlated with improvements in occlusion estimation. This might be due to the need 
for a network to identify non-matchable pixels and to infer the flow from the context.  Tab.~\ref{tab:occlusion_results} shows a comparison of our F1 score to other reported results, on the MPI Sintel dataset. Our updated training protocol improves the best reported occlusion result by more than $5\%$.

\subsection{Official Results}
\label{experiments:results}
Evaluating our best models on the MPI Sintel and KITTI benchmarks shows a clear improvement over the IRR-PWC baseline, and an advantage over all other PWC-Net variants. 

\noindent \textbf{MPI Sintel\quad}
We uploaded results for three different versions: (i) with reduced regularization, (ii) with improved zooming schedule and (iii) with the best dynamic scoping approach. As Tab.~\ref{tab:results_sintel_kitti} shows, there is a consistent improvements on the test set. This is congruent with the results in Tab.~\ref{tab:ablation}, obtained on the validation set.

At the time of submission, our method ranks first place on the MPI Sintel benchmark, improving two-frame methods by more than 10\%, surpassing other competitive methods trained on multiple datasets, with multiple frames and all other PWCNet variants, using an equal or larger size of trainable parameters. On the clean pass, we improve the IRR-PWC result by 20 ranks and 7\%. Interestingly, analyzing Sintel categories in Tab.~\ref{tab:results_sintel_kitti}, our model is leading in the categories of fast pixels ($S40_{+}$) and non-occluded pixels, while also producing the best estimation for occluded pixels among two frame methods. This is consistent with our insights on these challenging categories from Sec.~\ref{method:scene_scoping}. Fig.~\ref{fig:occ-fast-improve} shows a comparison of our method in the category of fast pixels, with the other two leading methods on Sintel.

\noindent \textbf{KITTI\quad}On KITTI 2012 and KITTI 2015, we saw a consistent improvement from the baseline model results, of more than 19.7\% and 12\% respectively,  surpassing all other published methods of the popular PWC-Net architecture, and achieving state-of-the art EPE results among two frame methods. Since our training protocol can be readily applied to other methods, we plan, as future work, to test it on other leading architectures.

\section{Conclusions}
\label{conclusions}

While a lot of effort is dedicated for finding effective network architectures, much less attention is provided to the training protocol. However, when performing complex multi-phase training, there are many choices to be made, and it is important to understand, for example, the proper way to schedule the multiple forms of network regularization. In addition, the method used for sampling as part of the augmentation process can bias the training protocol toward specific types of samples.

In this work, we show that the conventional scope sampling method leads to the neglect of many challenging samples, which hurts performance. We advocate for the use of larger scopes (crops and zoom-out) when possible, and a careful crops positioning when needed.
We further show how regularization and augmentation should be relaxed as training progresses. The new protocol developed has a dramatic effect on the performance of our trained models and leads to state of the art results in a very competitive domain.

\section*{Acknowledgement}
This project has received funding from the European Research Council (ERC) under the European Unions Horizon 2020 research and innovation programme (grant ERC CoG 725974).

{\small
\bibliographystyle{ieee_fullname}
\bibliography{scopeflow}
}

\clearpage

\appendix

\maketitle

\section{Introduction}
With this appendix, we would like to provide more details on our training pipeline and framework, as well as more visualizations of the improved flow and occlusion estimation.

The ScopeFlow approach provides an improved training pipeline for optical flow models, which reduces the bias in sampling different regions of the input images while keeping the power of the regularization provided by fixed-size partial crops. Due to the sizable impact on performance that can be achieved by the improved training pipeline, we created a generic, easy to configure, training package, in order to encourage others to train state of the art models with our improved pipeline, as described in Sec.~\ref{sec:package}.


\section{Dynamic scoping}
The common pipeline of batch sampling and augmentation in optical flow training includes four stages: (i) sampling images, (ii) applying random photometric changes, (iii) applying a random affine transformation, and (iv) cropping a fixed-size randomly located patch. We propose changes for stages (iii) and (iv), by choosing the zooming parameters more carefully along with the training, and incorporating a new randomized cropping scheme, presented and extensively tested in our paper.

Fig.~\ref{fig:dynamic_soping} provides a demonstration of the ScopeFlow pipeline, which enlarges the variety of scopes presented during the data-driven process while reducing the bias towards specific categories.

\begin{figure}
    \begin{center}
      \includegraphics[width=0.95\linewidth]{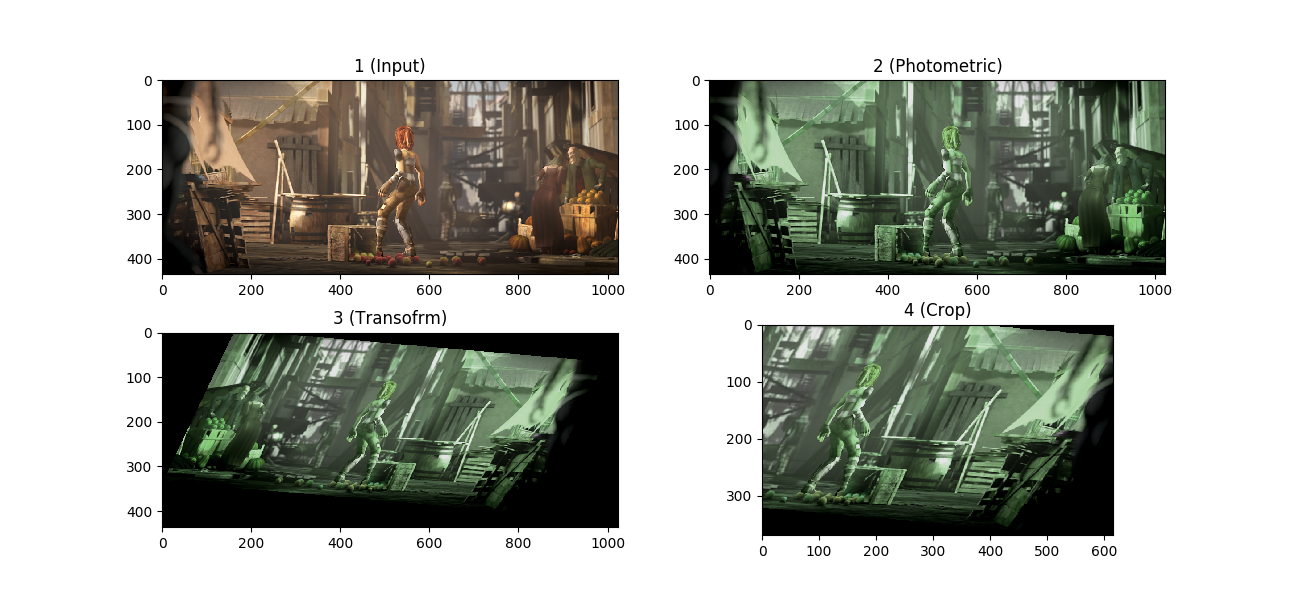}
      \includegraphics[width=0.95\linewidth]{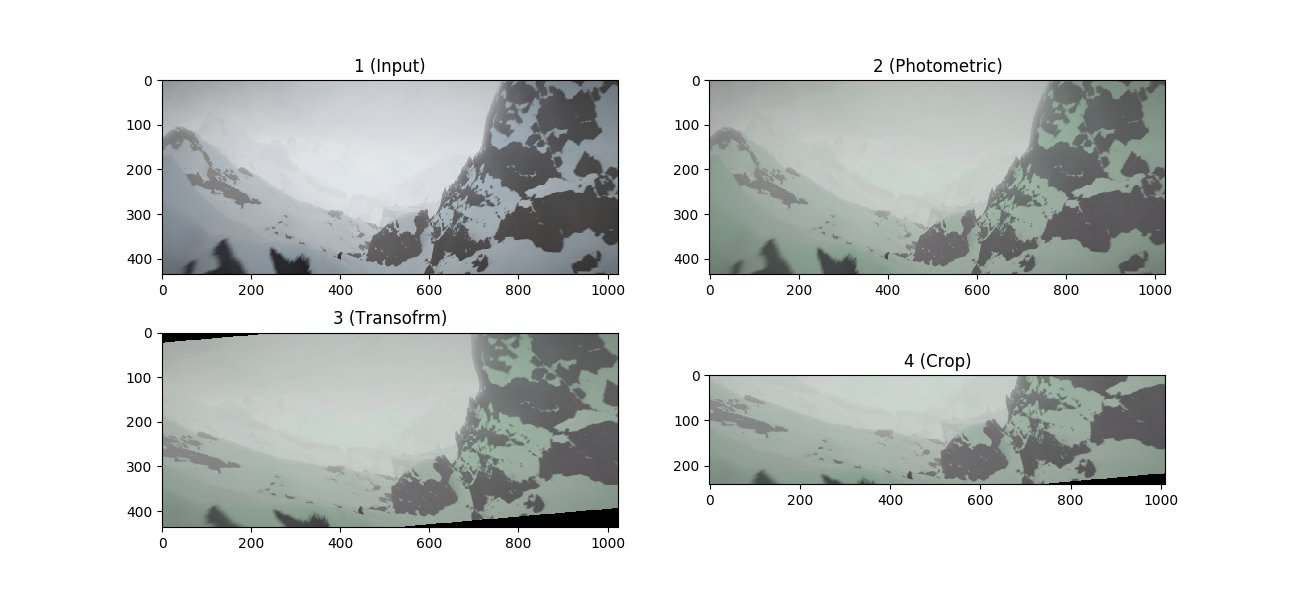}
      \includegraphics[width=0.95\linewidth]{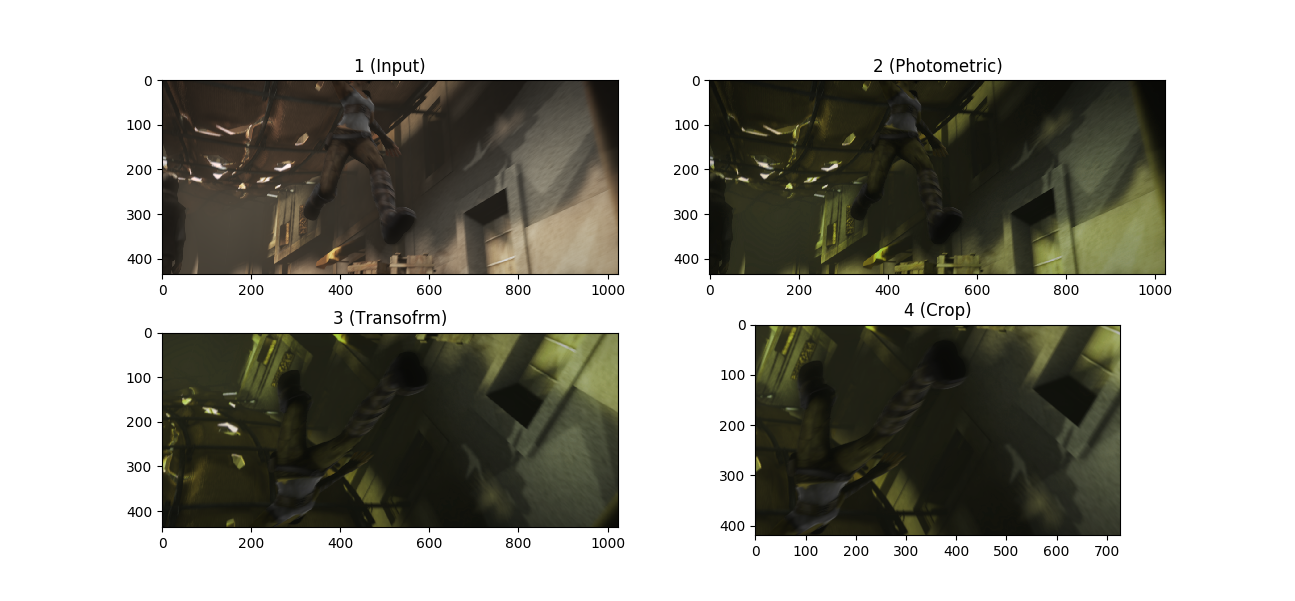}
      \includegraphics[width=0.95\linewidth]{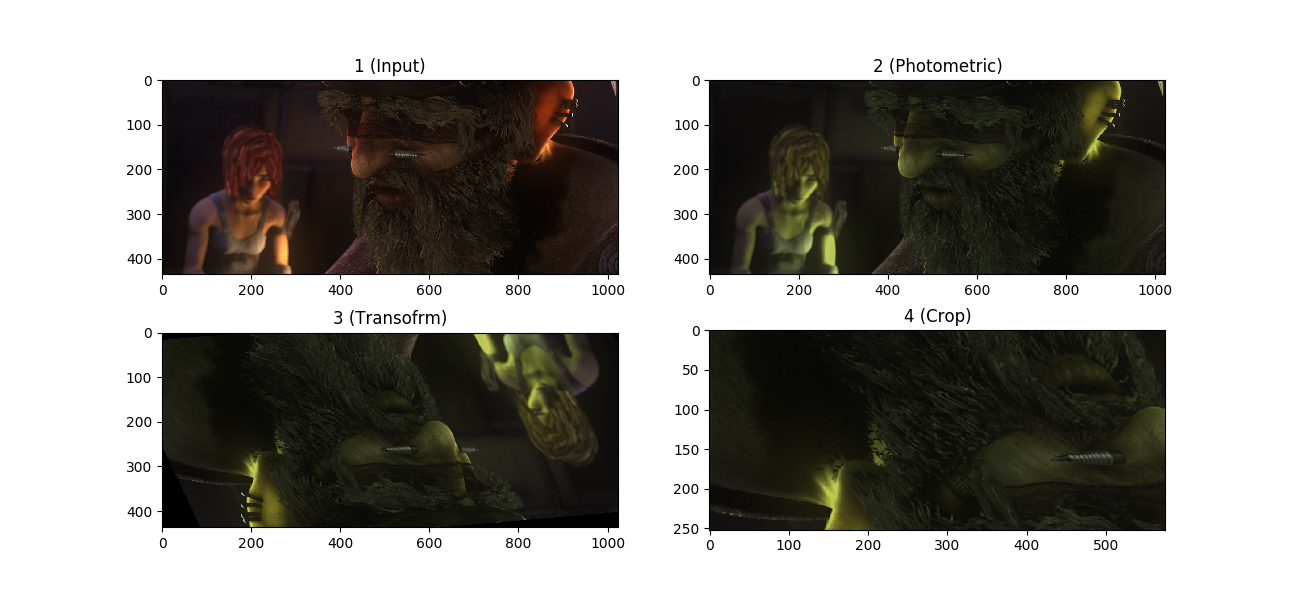}

    \end{center}
   \caption{Randomized scoping within $[r_{min},r_{max}]=[0.5,1]$. Training with ScopeFlow online-processing approach leads to the learning of richer features and reduces the error in challenging motion categories, such as fast speed and occluded pixels.}
\label{fig:dynamic_soping}
\end{figure}

\section{ScopeFlow software package}
\label{sec:package}
In order to simplify the applicability of our approach, we created a small and easy to use package, which supports YAML configurations of a multi-stage optical flow model training and evaluation. We found this approach very helpful when running experiments for finding the best scoping augmentation approach.

Our code and models are \href{https://github.com/avirambh/ScopeFlow}{\textit{available online}}.
Furthermore, we provide easy visualization of our online augmentation pipeline, as described in the README of our package.

\section{Comparison to the IRR baseline}
In our experiments, we use the IRR~\cite{Hur2019CVPR} variant, of the popular PWC-Net architecture, to evaluate our method. This variant has shown to provide excellent results, while keeping a low number of parameters. To emphasize the improvements, we give here a thorough comparison, of all the public results obtained in the main three benchmarks, for our method and the IRR baseline.

\subsection{MPI Sintel}
Other than leading the MPI Sintel~\cite{Butler:ECCV:2012} table, as can be seen in Tab.~\ref{fig:mpi_sintel_14_10} and Tab.~\ref{fig:mpi_sintel_18_11} in Sec.~\ref{sec:public_tables}, we improve the baseline IRR models by a large margin in all metrics, and in particular the challenging metrics of occlusions (14.7\%) and fast pixels (18.4\%). The only metric that did not improve is the metrics of low-speed pixels ($<40$), which should not be a surprise, since our method reduces the bias between the fast and slow pixels, as shown in our paper.

\subsection{KITTI 2012}
We uploaded our results to the KITTI 2012~\cite{Geiger2012CVPR} benchmark. As can be seen in Tab.~\ref{tab:KITTI 2012_irr} and Tab.~\ref{tab:KITTI 2012_scopeflow}, training the IRR model with ScopeFlow pipeline improves the mean EPE by more than 20\%. Moreover, the improvement is achieved for all thresholds of outliers and for all metrics.

\medskip
IRR on KITTI 2012:
\begin{table}[h!]
    \centering
    \begin{tabular}{c | c | c | c | c}
        {\bf Error} & {\bf Out-Noc} & {\bf Out-All} & {\bf Avg-Noc} & {\bf Avg-All}\\ \hline
        2 pixels & 5.34 \% & 9.81 \% & 0.9 px & 1.6 px\\
        3 pixels & 3.21 \% & 6.70 \% & 0.9 px & 1.6 px\\
        4 pixels & 2.33 \% & 5.16 \% & 0.9 px & 1.6 px\\
        5 pixels & 1.86 \% & 4.25 \% & 0.9 px & 1.6 px\\
        \hline
    \end{tabular}
    \caption{IRR results on KITTI 2012}
    \label{tab:KITTI 2012_irr}
\end{table}

ScopeFlow on KITTI 2012:
\begin{table}[h!]
    \centering
    \begin{tabular}{c | c | c | c | c}
        {\bf Error} & {\bf Out-Noc} & {\bf Out-All} & {\bf Avg-Noc} & {\bf Avg-All}\\ \hline
        2 pixels & 4.36 \% & 8.30 \% & 0.7 px & 1.3 px\\
        3 pixels & 2.68 \% & 5.66 \% & 0.7 px & 1.3 px\\
        4 pixels & 1.96 \% & 4.39 \% & 0.7 px & 1.3 px\\
        5 pixels & 1.56 \% & 3.60 \% & 0.7 px & 1.3 px\\
        \hline
    \end{tabular}
    \caption{ScopeFlow results on KITTI 2012}
    \label{tab:KITTI 2012_scopeflow}
\end{table}

In addition, Fig.~\ref{fig:KITTI 2012_compare} provides a qualitative comparison to the baseline IRR model on the KITTI 2012 benchmark. As can be seen, most of the improvement provided by the ScopeFlow pipeline is in the challenging occluded and marginal pixels.

\begin{figure*}[th]
\begin{center}
    \includegraphics[width=0.95\linewidth]{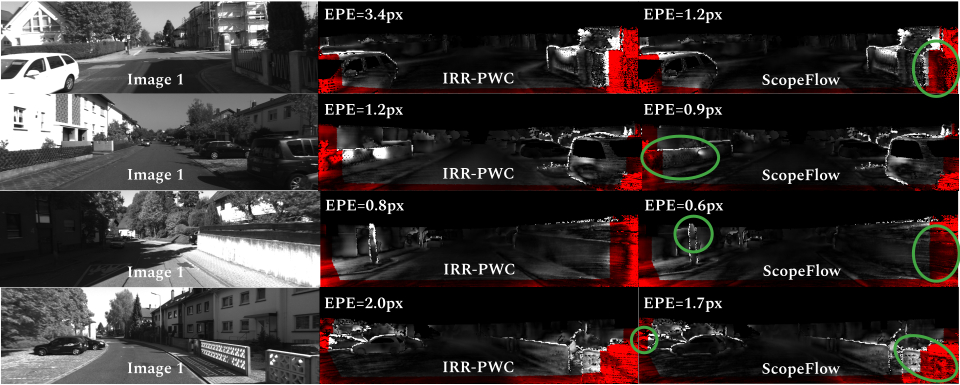}
    \end{center}
   \caption{Qualitative comparison to the IRR baseline on KITTI 2012 benchmark. Our improved training pipeline got the lowest AEPE on KITTI 2012 among all other two-frame methods, using a low parameters off-the-shelf model architecture, which has an inferior performance on the KITTI benchmarks. Occluded regions are marked in red, erroneous regions with a higher intensity. Most of the improvement provided by ScopeFlow is in these challenging marginal pixels.}
\label{fig:KITTI 2012_compare}
\end{figure*}

\subsection{KITTI 2015}
We uploaded our results to the KITTI 2015~\cite{Menze2015CVPR} benchmark. As can be seen in Tab.~\ref{tab:irr_KITTI 2015} and Tab.~\ref{tab:scopeflow_KITTI 2015}, training the IRR model with ScopeFlow pipeline improves the mean EPE by more than 12\%, in the default category of 3 pixels. Moreover, the improvement is achieved for all thresholds of outliers and for all metrics.

\medskip
IRR on KITTI 2015:
\begin{table}[h]
    \centering
    \begin{tabular}{c | c | c | c}
        {\bf Error} & {\bf Fl-bg} & {\bf Fl-fg} & {\bf Fl-all}\\ \hline
        All / All & 7.68 \% & 7.52 \% & 7.65 \%\\
        Noc / All & 4.92 \% & 4.62 \% & 4.86 \%\\
        \hline
        \end{tabular}
    \caption{IRR results on KITTI 2015}
    \label{tab:irr_KITTI 2015}
\end{table}

ScopeFlow on KITTI 2015:
\begin{table}[h]
    \centering
    \begin{tabular}{c | c | c | c}
        {\bf Error} & {\bf Fl-bg} & {\bf Fl-fg} & {\bf Fl-all}\\ \hline
        All / All & 6.72 \% & 7.36 \% & 6.82 \%\\
        Noc / All & 4.44 \% & 4.49 \% & 4.45 \%\\
        \hline
        \end{tabular}
    \caption{ScopeFlow results on KITTI 2015}
    \label{tab:scopeflow_KITTI 2015}
\end{table}

In addition, Fig.~\ref{fig:KITTI 2015_compare} provides a qualitative comparison to the leading VCN model on the KITTI 2015 benchmark, showing a clear improvement for handling non-background challenging objects. Our results are leading the category of non-background pixels, which belong to faster foreground objects.

\begin{figure*}[th]
\begin{center}
    \includegraphics[width=0.95\linewidth]{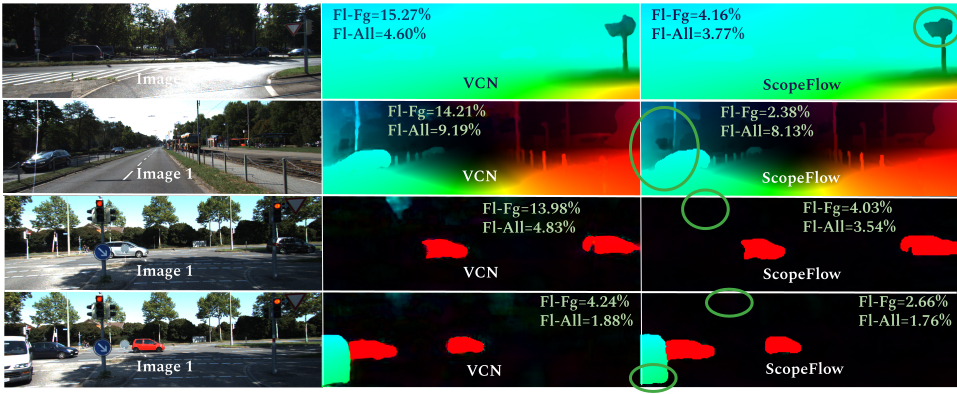}
    \end{center}
   \caption{Qualitative comparison to the VCN~\cite{yang2019vcn} method KITTI 2015 benchmark. Although the VCN architecture gets the best outlier percentage among all pixels, we have a better handling for non-background objects among all other two-frame methods.}
\label{fig:KITTI 2015_compare}
\end{figure*}

\section{Ablation visualization}
Fig.~\ref{fig:sintel_ablation} provides a demonstration of the contribution of different training changes, composing the ScopeFlow pipeline presented in our paper, to the improvement of the final flow. As expected, most of the improvements are in the marginal image area. Our method improves, in particular, the moving objects, which have many occluded and fast-moving pixels.

\begin{figure*}[th]
\begin{center}
    \includegraphics[width=0.95\linewidth]{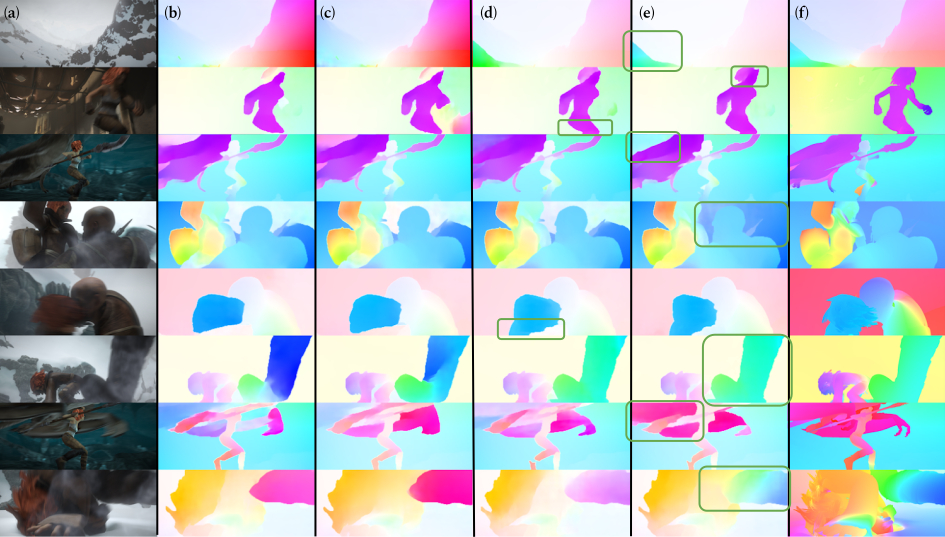}
    \end{center}
   \caption{Ablation visualization on the MPI Sintel training set. (a) First image, (b) IRR-PWC baseline, (c) ScopeFlowR (reduced regularization), (d) ScopeFlowZ (zooming schedule), (e) ScopeFlow (final model), (f) Ground Truth flow.}
\label{fig:sintel_ablation}
\end{figure*}

\section{Occlusions comparison}
In order to provide a qualitative demonstration of our improved occlusion estimation, we compared our results to the methods with the highest reported occlusion estimation. We provide a layered view of false positive, false negative and true positive predictions. All occlusion estimations created using the pre-trained models, published by the authors, and sampled from the Sintel final pass dataset. Fig.~\ref{fig:sintel_occ} shows that the model trained with the ScopeFlow pipeline improves occlusion estimation in the marginal image area and mainly for foreground objects. We used the F1 metric, with an average approach of 'micro' (the same trend presented by all averaging approaches).

\begin{figure*}[th]
\begin{center}
    \includegraphics[width=0.95\linewidth]{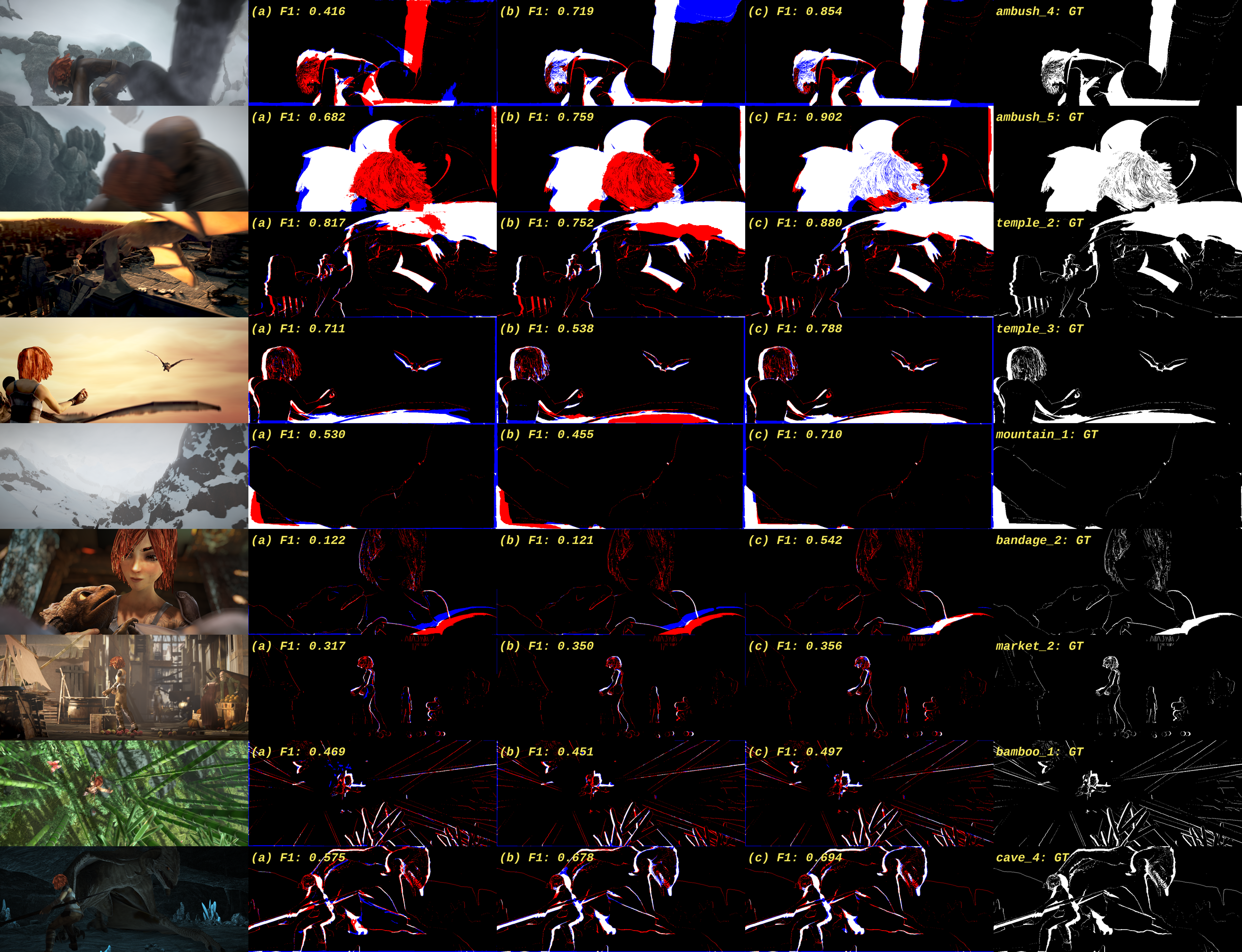}
    \end{center}
   \caption{\textbf{Occlusion comparison over Sintel final pass}. Comparison of occlusion estimations created by: (a) FlowNet-CSSR-ft-sd~\cite{ISKB18}, (b) IRR-PWC~\cite{Hur2019CVPR} baseline, and (c) ScopeFlow (ours). First frame on the left column and ground truth flow on the right column. For each occlusion map: {\color{blue}false positive} are in blue, {\color{red}false negative} in red, and true positive in white. All occlusion maps estimated using Sintel Final samples and the original models published by the authors. Our improvements are mainly for foreground objects on the image margins.}
\label{fig:sintel_occ}
\end{figure*}

\section{Public tables}
\label{sec:public_tables}

We uploaded our results to the two main optical flow benchmarks: MPI Sintel and KITTI (2012 \& 2015). In the subsections below, we provide the screenshots that capture the sizable improvements achieved by using our pipeline for training an optical flow model, with an off-the-shelf, low parameters model. Since our method can support almost any architecture, we plan, as future work, to apply it to other architectures as well.

\subsection{MPI Sintel}
We add here two screenshots of the public table: (i) the table on the day of upload (14.10.19), and (ii) the table after the official submission deadline for CVPR 2020. As shown in Fig.~\ref{fig:mpi_sintel_14_10}, our method ranks first on MPI Sintel since 14.10.19, surpassing all other methods, and leading the categories of: (i) matchable pixels, (ii) pixels far more than $10px$ from the nearest occlusion boundary, and (iii) fast-moving pixels ($>40$ pixels per frame). We also provide a screenshot taken after the official CVPR paper submission deadline, in Fig.~\ref{fig:mpi_sintel_18_11}, showing our method still leading the Sintel benchmark. We changed our method's name after the initial upload (on 14.10.19) from OFBoost to ScopeFlow.

\begin{figure*}[t]
\begin{center}
    \includegraphics[width=0.95\linewidth]{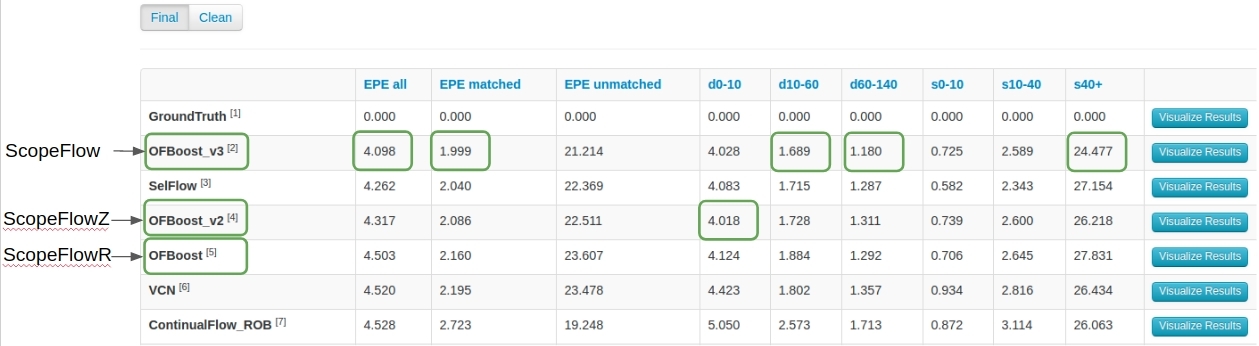}
    \end{center}
   \caption{Public Sintel table on the day of upload (taken on 14.10.19). Our method is leading the challenging final pass of the MPI Sintel benchmark. We renamed our method for clarity from OFBoost to ScopeFlow. ScopeFlowR is our method with regularization changes, ScopeFlowZ is our version with zooming changes. ScopeFlow is our final version with dynamic scoping.}
\label{fig:mpi_sintel_14_10}
\end{figure*}

\begin{figure*}[t]
\begin{center}
    \includegraphics[width=0.95\linewidth]{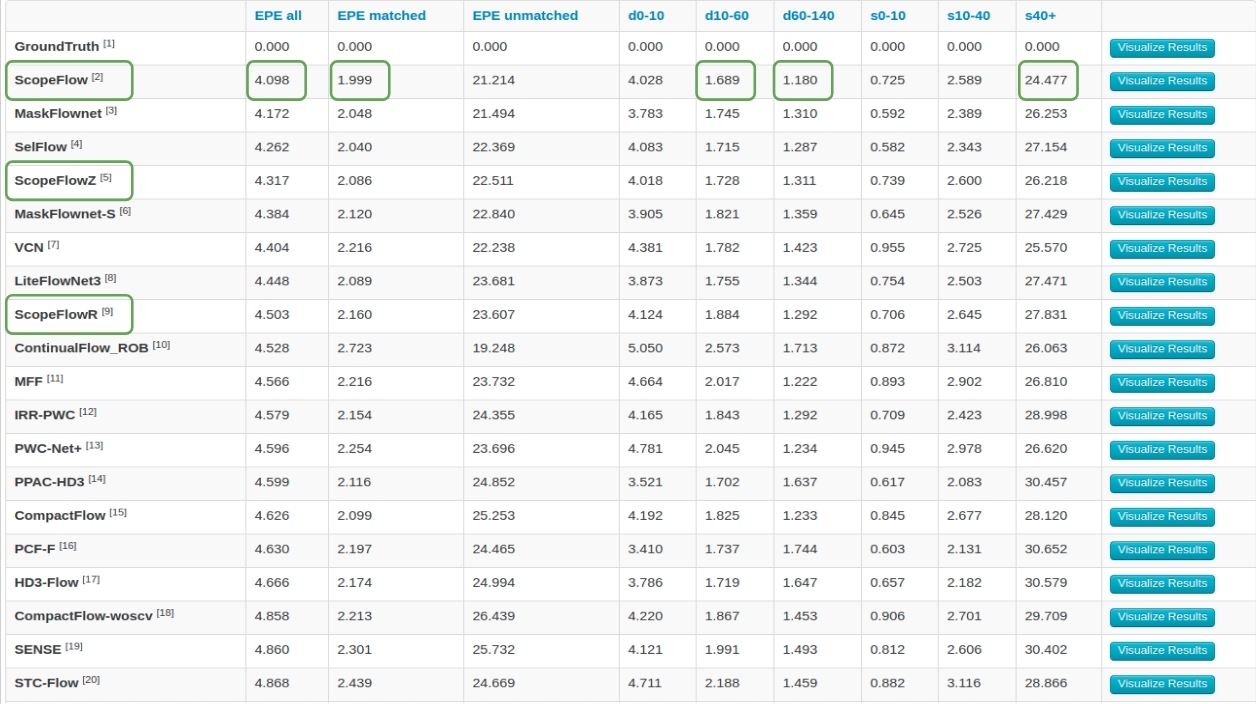}
    \end{center}
   \caption{Public Sintel table after CVPR papers submission deadline (taken on 18.11.19). Our method is still leading the main Sintel table after the addition of many new methods.}
\label{fig:mpi_sintel_18_11}
\end{figure*}

\subsection{KITTI 2012}
Fig.~\ref{fig:KITTI 2012_15_11} shows a screenshot of the KITTI 2012 flow table, with the lowest outlier threshold (of 2\%), taken on the CVPR paper submission deadline. Our method provides the lowest average EPE among all published two-frame methods, lower by 23\% from the IRR-PWC baseline results.

\begin{figure*}[t]
\begin{center}
    \includegraphics[width=0.95\linewidth]{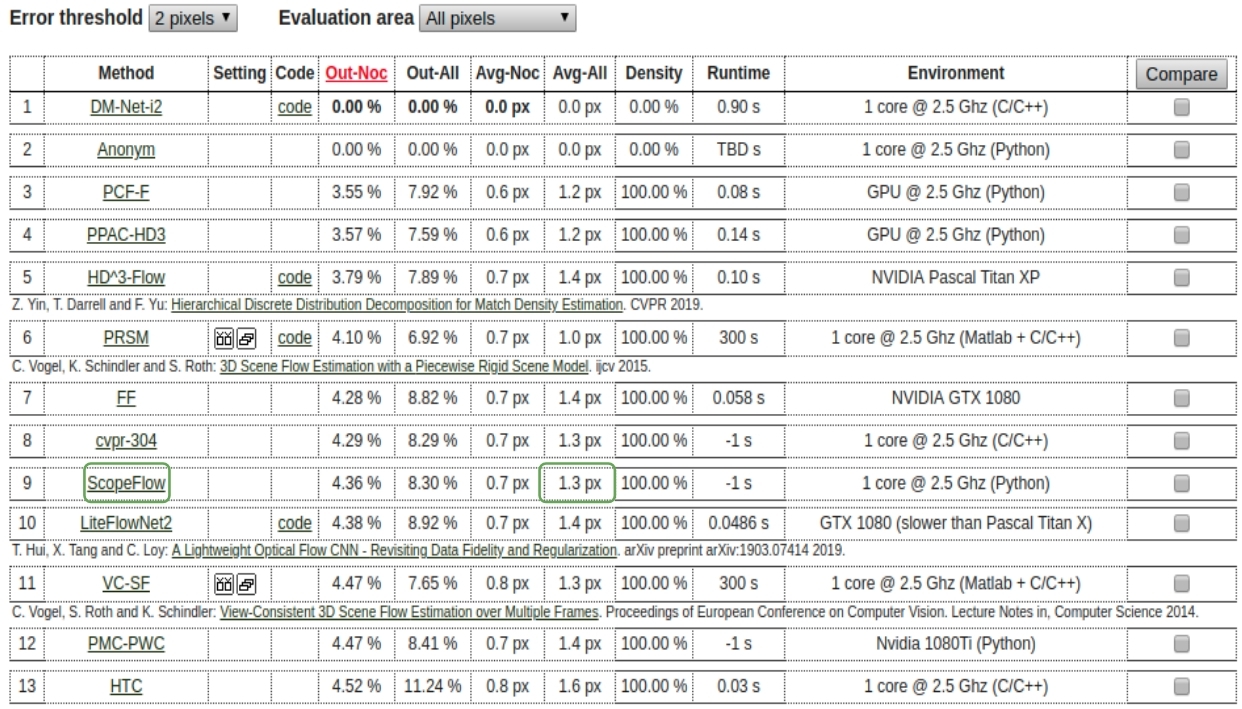}
    \end{center}
   \caption{Public KITTI 2012 flow table (with the lowest outlier threshold of 2\%) on the CVPR paper submission deadline (taken on 15.11.19). Our method is with the lowest AEPE among all published two-frame methods, lower by 23\% from the IRR-PWC baseline.}
\label{fig:KITTI 2012_15_11}
\end{figure*}

\subsection{KITTI 2015}
Fig.~\ref{fig:KITTI 2015_15_11} shows a screenshot of the KITTI 2015 flow table, taken on the CVPR paper submission deadline. Our method provides the lowest percentage of outliers, averaged over foreground regions, among all published two-frame methods. Moreover, the percentage of outliers, averaged over all ground truth pixels, is lower by more than 12\% from the IRR-PWC baseline results.

\begin{figure*}[]
    \begin{center}
    \includegraphics[width=0.95\linewidth]{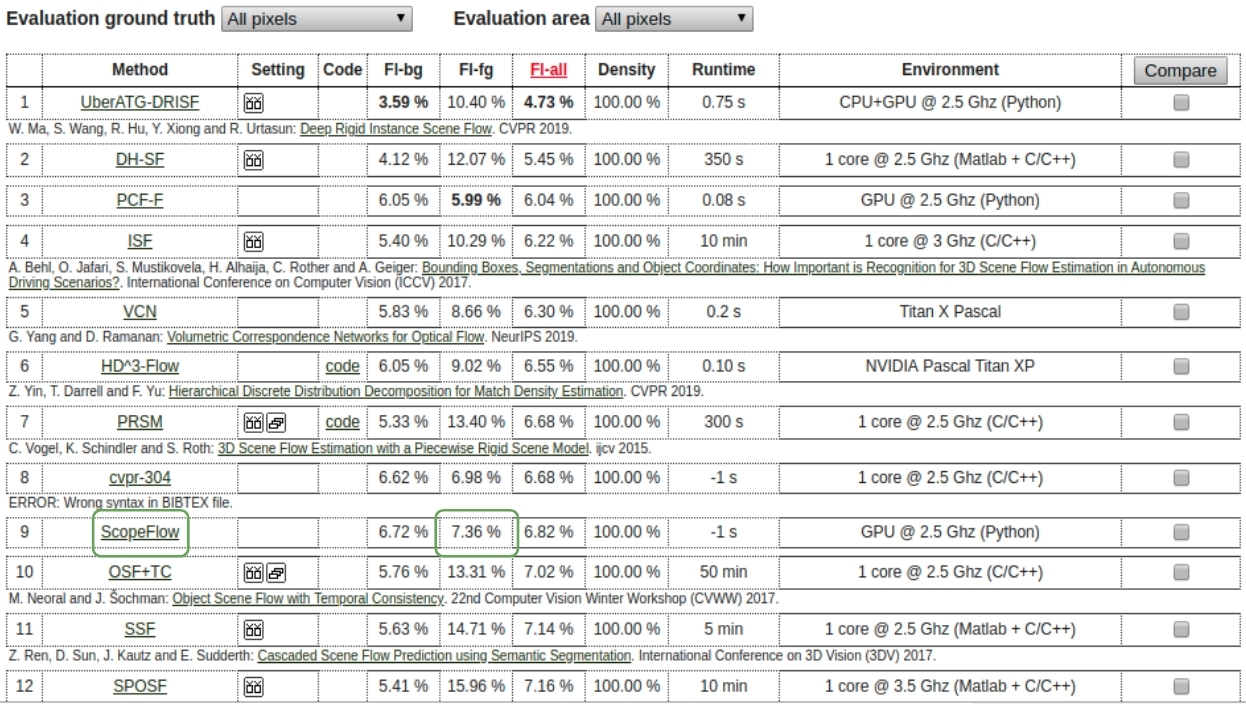}
    \end{center}
   \caption{Public KITTI 2015 flow table on the CVPR paper submission deadline (taken on 15.11.19). Our method is with the lowest percentage of foreground (objects) outliers among all published two-frame methods.}
\label{fig:KITTI 2015_15_11}
\end{figure*}

\end{document}